\newcommand*{\circled}[1]{\lower.7ex\hbox{\tikz\draw (0pt, 0pt)%
    circle (.5em) node {\makebox[0em][c]{\small #1}};}}
\def\yyh{\textcolor{black}}
\newtheorem{corollary}{Corollary}
\newtheorem{lemma}{Lemma}
\def\revise{\textcolor{black}}
\newtheorem{theorem}{Theorem}
\newtheorem{proposition}[theorem]{Proposition}%
\newtheorem{remark}{Remark}%
\newtheorem{definition}{Definition}%
\begin{document}

\title[Sanitized Clustering against Confounding Bias]{Sanitized Clustering against Confounding Bias}


\author[1,2,3,4]{\fnm{Yinghua} \sur{Yao}}\email{eva.yh.yao@gmail.com}
\equalcont{The first version of this work was done when the author was at SUSTech.} 
\author[1,2]{\fnm{Yuangang} \sur{Pan}}\email{yuangang.pan@gmail.com}
\author[1,2]{\fnm{Jing} \sur{Li}}\email{j.lee9383@gmail.com}
\author*[1,2, 4]{\fnm{Ivor} \sur{Tsang}}\email{ivor.tsang@gmail.com}
\author*[3]{\fnm{Xin} \sur{Yao}}\email{xiny@sustech.edu.cn}

\affil[1]{\orgdiv{Center for Frontier AI Research}, \orgname{Agency for Science, Technology, and Research (A*STAR)}, \orgaddress{\postcode{138632}, \country{Singapore}}}

\affil[2]{\orgdiv{Institute of High Performance Computing}, \orgname{Agency for Science, Technology, and Research (A*STAR)}, \orgaddress{\postcode{138632}, \country{Singapore}}}

\affil[3]{\orgdiv{Computer Science and Enigineering}, \orgname{Southern University of Science and Technology (SUSTech)}, \orgaddress{\city{Shenzhen}, \postcode{518055}, \state{Guangzhou}, \country{China}}}

\affil[4]{\orgdiv{Australian Artificial Intelligence Institute}, \orgname{University of Technology Sydney (UTS)}, \orgaddress{\city{Sydney}, \postcode{2007}, \state{NSW}, \country{Australia}}}



\abstract{Real-world datasets inevitably contain biases that arise from different sources or conditions during data collection. Consequently, such inconsistency itself acts as a confounding factor that disturbs the cluster analysis. Existing methods eliminate the biases by projecting data onto the orthogonal complement of the subspace expanded by the confounding factor before clustering. Therein, the interested clustering factor and the confounding factor are coarsely considered in the raw feature space, where the correlation between the data and the confounding factor is ideally assumed to be linear for convenient solutions. These approaches are thus limited in scope as the data in real applications is usually complex and non-linearly correlated with the confounding factor. This paper presents a new clustering framework named Sanitized Clustering Against confounding Bias (SCAB), which removes the confounding factor in the semantic latent space of complex data through a non-linear dependence measure. To be specific, we eliminate the bias information in the latent space by minimizing the mutual information between the confounding factor and the latent representation delivered by Variational Auto-Encoder (VAE). Meanwhile, a clustering module is introduced to cluster over the purified latent representations. Extensive experiments on complex datasets demonstrate that our SCAB achieves a significant gain in clustering performance by removing the confounding bias. The code is available at \url{https://github.com/EvaFlower/SCAB}.}

\keywords{Deep clustering, confounding bias, mutual information, non-linear dependence}



\maketitle

\section{Introduction}\label{sec1}

Clustering is an essential technique for unsupervised data analysis, whose objective is to partition samples into groups so that the samples in the same group are similar while those from different groups are significantly different~\citep{jain1999data}. 
Standard clustering methods~\citep{cheng1995mean,modha2003feature,xie2016unsupervised} is capable of capturing the desired semantic structure embedded in the clean raw data. 
However, biases are inherently present in real-world datasets, as they emerge from data collected across diverse times, scenarios, or platforms~\citep{listgarten2010correction,jacob2016correcting,li2020deep}.
These biases may introduce confounding factors that bring spurious correlation~\citep{wu2023discover}, obscuring the true underlying clustering structure~\citep{listgarten2010correction}, named as \textit{confounding biases} in this paper. 
Despite the inevitable presence of data biases, we argue that the bias information can be identified by domain experts~\citep{chierichetti2017fair,benito2004adjustment} and easily accessible (e.g., the data source usually denoted in metadata). In this study, we perform clustering \mbox{while removing the negative effect of the bias.}

Previous methods~\citep{jacob2016correcting,gagnon2012using} simply project raw data onto the subspace orthogonal to the space expanded by the confounding factor under the linear assumption before clustering.
Specifically, they decompose the data into linear combinations of the desired clustering factor and the confounding factor. In the linear space, they remove the bias information by simply subtracting the confounding covariate from the data. 
In parallel,~\citet{benito2004adjustment} applied an improved SVM which finds a linear hyperplane to separate two classes (i.e., the binary confounding factor that indicates the data source) in a supervised manner and then projects the raw data on this hyperplane. Such a method cannot scale to the scenario with multiple classes beyond the binary factor argued in~\citet{johnson2007adjusting}. In summary, former approaches are limited to the raw feature space, which may not capture high-level and representative features to describe the interested clustering factor as well as the confounding factor. In addition, they only consider linear dependence between the data and the confounding factor for clustering, which oversimplifies the real situations. The two flaws restrict the methods from applying to complex real-world data, where both the confounding factor caused by biases and the interested clustering factor are non-linearly embedded in the raw data.

\begin{figure}[!t]
    \centering
    \includegraphics[width=0.6\linewidth]{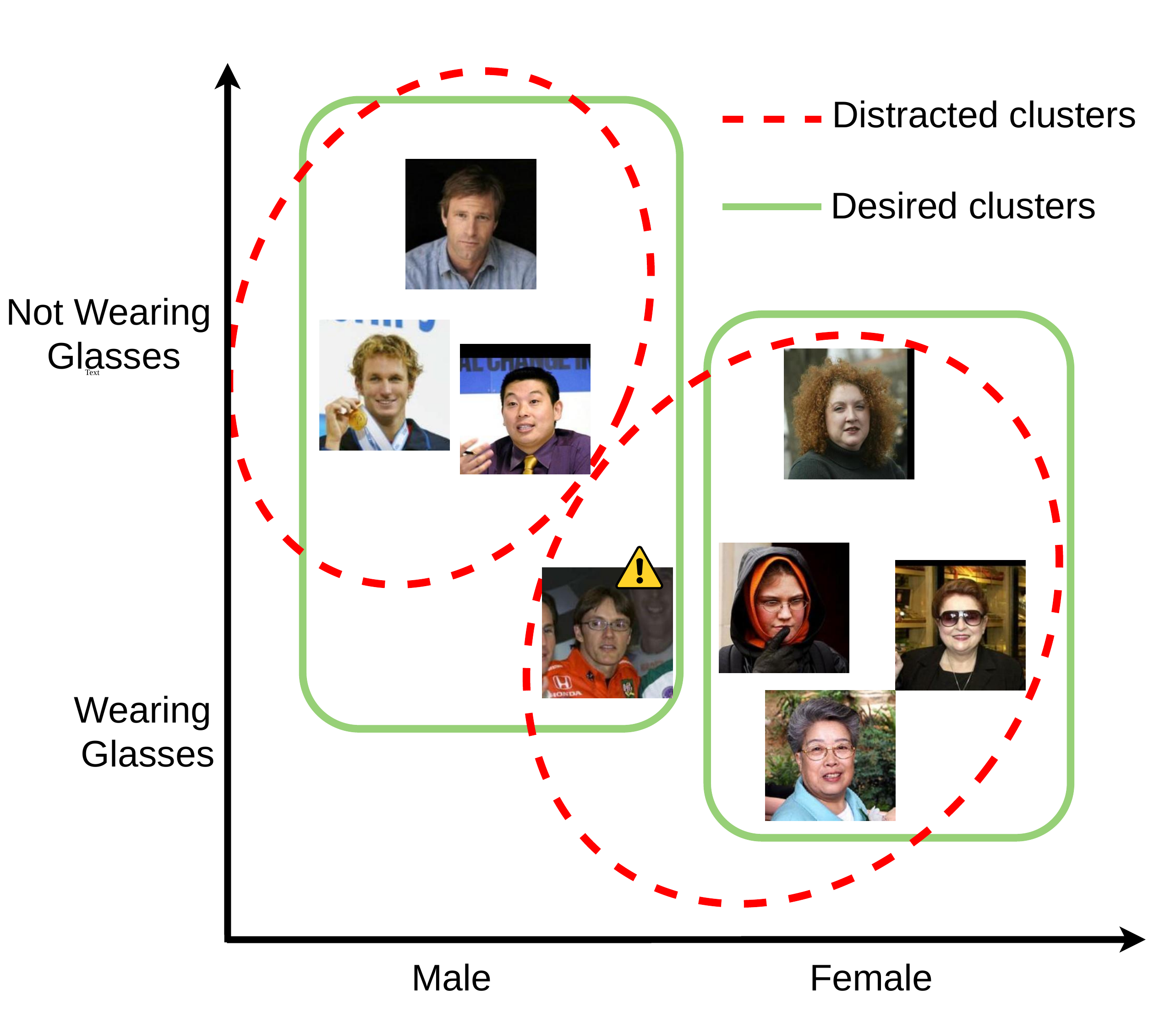}
    \caption{\label{fig:moti}Motivation: raw images contain two important factors: gender and glass. Suppose clusters with green solid lines are the desired clustering results, where the partitions are based on the gender factor only. Standard clustering algorithms that neglect the unwanted factor obtain clusters distracted by the glass factor, denoted by red dash lines. Notably, the face (a man wearing glasses) with the warning sign is viewed incorrectly grouped.}
\end{figure}
In this paper, we introduce a new clustering framework (Fig.~\ref{fig:network}), Sanitized Clustering Against confounding Bias (SCAB), applicable to high-dimensional complex biased data. SCAB is equipped with a deep representation learning module and a non-linear dependence measure to effectively eliminate bias information for superior clustering in the latent space. Specifically, our SCAB learns a clustering-favorable representation invariant to the biases within the VAE architecture~\citep{DBLP:journals/corr/KingmaW13}. The removal of bias information is achieved by minimizing the mutual information between latent representation and the confounding factor induced by biases (also interpreted as the disentanglement between the representation and the confounding factor in the later part of the paper). A tailor-designed clustering module is incorporated into VAE to cluster over the invariant representation. Benefiting from the non-linear dependence measure, our SCAB can obtain a precise clustering structure in the latent space of complex data robust to the biases. We summarize the contributions as follows:
\begin{itemize}
    \item We propose the first deep clustering framework SCAB for clustering complex data contaminated by confounding biases. Unlike existing related studies, SCAB performs semantic clustering in the latent space while minimizing the non-linear dependence between the latent representation and the biases. 
    \item Our theoretical analyses reveal that in our SCAB, (1) the loss for clustering maximizes a lower bound of the mutual information between the data representation and the desired clustering structure; (2) the loss for removing the biases minimizes an upper bound of the mutual information between the data representation and the confounding factor induced by biases.
     \item We conduct extensive experiments on seven biased image datasets. Empirical results demonstrate the superiority of our sanitized clustering with removing confounding biases, and our SCAB consistently achieves better results than existing baselines.
\end{itemize}

\section{Problem statement and related work}\label{sect:ps}
We first introduce standard clustering that neglects the data biases. Then, we motivate our problem setting where data contains confounding biases and discuss the deficiencies of existing work.
Last, we compare our setting with two related clustering branches and discuss the issues when their methodologies are applied to our setting. 

\subsection{Standard clustering}
Consider a dataset $X=[x_1, x_2, \ldots,  x_N]^T \in \mathbb{R}^{N \times D}$ consisting of $N$ samples with $D$ features. Standard clustering is to partition the dataset $X$ into $K$ groups by maximizing intra-cluster similarity and minimizing inter-cluster similarity:
\begin{equation}\label{eq:stand_cluster}
\min_{\mathcal{S}_x \in \mathbb{S}_{K,x}} \, F(\mathcal{S}_x).
\end{equation}
$\mathbb{S}_{K,x}$ denotes all feasible $K$-partitions of $X$\footnote{A $K$-partition of a set X denotes a collection of $K$ mutually disjoint non-empty subsets whose union is X. Namely, $\mathcal{S}_x=(S_1, S_2, \ldots, S_K)$, where $\bigcup_{i=1}^K S_i = X, S_i \cap S_j = \emptyset, 1 \le i  \neq j \le K$.}. $\mathcal{S}_x$ is a $K$-partition in raw feature. $F$ is the clustering objective, whose minimization aims at optimizing the quality of clustering. For instance, the $k$-means clustering objective is $F=\sum_{k=1}^{K}\sum_{n=1}^{N}s_{nk}\left\|x_n-e_{k}\right\|_{2}^{2}$, where $e_k$ is the $k$-th cluster centroid. $s_{nk}\in \{0,1\}$ denotes the cluster assignment which equals $1$ if $x_n$ is assigned to the $k$-th cluster and $0$ otherwise.

While classical approaches~\citep{cheng1995mean} conduct clustering in the raw feature space, recent deep clustering methods~\citep{xie2016unsupervised,guo2017improved,DBLP:conf/cvpr/HuangGZ20,niu2022spice} explores clustering-favourable latent representation for a better structure discovery. However, when there are obvious variances resulting from biases present in the data, all standard clustering methods are unavoidably distracted by the confounding biases and the clustering performance will degenerate (see Table~\ref{tb:simple},~\ref{tb:complex}).

\subsection{Clustering data contaminated by confounding biases}
When the data is collected from multiple sources or different conditions, each source may have its own biases. These biases could mask genuine similarities or differences between data points, distorting the desired clustering results~\citep{jacob2016correcting}. In this case, the data source can be said a confounding factor that hinders the accurate clustering structure.
In addition, confounding factors that bias the clustering results in other scenarios can also be identified by the domain experts. For instance, in the facial recognition task, whether people wearing glasses or not would impair the recognition results for identity~\citep{sharif2016accessorize}. 

In order to deliver a precise clustering structure, we consider removing the influence of these confounding biases. We suppose such bias information can be always described by a label indicator, which is an effective encoding for the confounding factor (e.g., a source indicator indicating the data is from source 1, 2, or etc.). Given the complete instance-wise confounding factor, we define our problem setting in the following. 
\begin{definition}[Sanitized clustering with the removal of confounding bias]\label{df:our problem}
Let $X \in \mathcal{R}^{N \times D}$ denote a dataset consisting of $N$ samples. Let $C=[c_1, c_2, \ldots, c_N]^T \in \left\{0, 1\right\}^{N \times G}$ be the corresponding labels with regards to a certain confounding factor~$c$, where $C_{i, j}=1$ if $x_i$ belongs to class $j$ and $C_{i, j}=0$ otherwise; $G$ is the number of categories. Our goal is to find a partition  $\mathcal{S}_x \in \mathbb{S}_{K,x}$, such that $\mathcal{S}_x$ is uninformative of~$c$. The objective is:
\begin{equation}\label{eq:scab}
 \min_{\mathcal{S}_x \in \mathbb{S}_{K,x}} \, F(\mathcal{S}_x),
 \quad s.t. \; \mathcal{S}_x \perp c,
\end{equation}
where $\perp$ denotes that two variables are independent. 
\end{definition}

\textbf{Existing work.} Some work~\citep{jacob2016correcting,listgarten2010correction,gagnon2012using} targeting the problem (Definition~\ref{df:our problem}) are built on a linear model that assumes the confounding factor is linearly correlated with the data. Mathematically, let $A \in \{0,1\}^{N \times K}$ denote a group assignment matrix, and each raw of $\alpha \in \mathbb{R}^{K \times D}$ denote a cluster centroid.  Supposing $C \in \{0,1\}^{N \times G}$ represents the class matrix converted via the confounding factor~$c$, and each raw of $\beta \in \mathbb{R}^{G \times D}$ denotes the centroid of the corresponding category. Then, the linear model is formulated as:
\begin{equation}\label{eq:ruv}
    X=A\alpha+C\beta+\varepsilon,
\end{equation}
where $\varepsilon$~denotes some prior noise. $\beta$ can be estimated via a regression model by setting $A\alpha=0$~\citep{jacob2016correcting}. 
By subtracting the bias term~$C\beta$, a purified dataset $\hat{X}$ is: 
\begin{equation}\label{eq:lp}
   \hat{X} = X-C\beta.
\end{equation}
Then, a regular clustering method like $k$-means is conducted on $\hat{X}$ to obtain a partition $\mathcal{S}_{\hat{x}}$ (i.e., $A$ and $\alpha$). Under the linear assumption, the obtained partition thus satisfies the independent constraint, namely, $\mathcal{S}_{\hat{x}} \perp c$.

\textit{Deficiencies that make existing approaches impractical for high-dimensional complex data.} (1) They are developed in the raw feature space, which is insufficient to discover the underlying structures in terms of the interested factor as well as the confounding factor, i.e., $\alpha$ and $\beta$ in~Eq.\eqref{eq:ruv}. (2) Only linear dependence is explored. The removal  of the confounding factor is simply via a linear projection, i.e., Eq.\eqref{eq:lp}, which will fail when the data has a non-linear dependence with the confounding factor. 

\subsection{Related clustering branches}
\label{sect:ac&fc}

\textbf{Alternative clustering}~\citep{wu2018iterative} suggests finding an alternative structure based on the existing clustering result to unveil a new perspective of the dataset. 
\citet{niu2013iterative,wu2019solving} pursued a novel clustering while minimizing its dependence on the given clustering structure. 
In particular, the relevance is measured by a specific kernel independence measure, the Hilbert-Schmidt independence criterion (HSIC). 
Given a dataset $X\in \mathcal{R}^{N \times D}$, let $Y=[y_1, y_2, \ldots, y_N]^T \in \left\{0, 1\right\}^{N \times K_0}$ be an existing clustering result over $X$, where $K_0$ is the cluster number. $y_{i, j}=1$ if $x_i$ belongs to the $j$-th cluster and $y_{i, j}=0$ otherwise. The aim is to obtain an alternative structure $U \in \mathcal{R}^{N\times K}$ with $K$ clusters on a subspace with lower dimensions $Q$ $\ll D$. The objective is defined as:
\begin{align} \label{eq:alter_cluster}
    \max_{W, U} \, \operatorname{HSIC}(X W, U)-\lambda \operatorname{HSIC}(X W, Y), \quad s.t. \quad W^{T} W=I, U^{T} U=I.
\end{align}
where $W \in \mathcal{R}^{D\times Q}$ denotes the projection matrix. The solution of~Eq.\eqref{eq:alter_cluster} can be referred to~\citet{niu2013iterative,wu2018iterative}.

\textit{Alternative clustering vs. our setting (Def.~\ref{df:our problem}).} Although starting from a different motivation, Eq.\eqref{eq:alter_cluster} can be a practical implementation form for~Eq.\eqref{eq:scab} by replacing the given clustering structure with the confounding factor. However, obtaining the subspace irrelevant to the confounding factor by a linear projection is not suitable for the high-dimensional complex dataset where the factor is a high-level semantic feature. Meanwhile, such a technique requires storing a full batch of data for clustering, which incurs a heavy memory complexity of $\mathcal{O}(N^2)$.

\yyh{\textbf{Fair clustering}\footnote{\yyh{Note that some recent work~\citep{mahabadi2020individual,vakilian2022improved} which are also called fair clustering are not related to our setting, because they follow the individual fairness~\citep{jung2019center} where group attributes are not specified.}} that extends group fairness~\citep{feldman2015certifying} to clustering} explores the clustering structure while ensuring a balanced proportion within each cluster regarding some specified sensitive attribute~\citep{chierichetti2017fair}. 
With a slight abuse of notation, suppose $X$ can be represented as the disjoint union of $H$ protected subgroups in terms of some sensitive attribute $a$, i.e., $X=\bigsqcup_{h \in [H]} X_{h}=\bigcup_{h \in [H]}\{(x, h)\mid x \in X_h\}$.
For a clustering result $\mathcal{S}_{x} \in \mathbb{S}_{K,x}$, the balance of each cluster $S_k$ and the whole clustering result $\mathcal{S}_{x}$ can \mbox{be respectively defined as:}
\begin{align}\label{eq:balance}
\mathcal{B}(S_{k} \mid a) =\min_{h \neq h' \in [H]} \frac{\lvert X_{h} \cap S_{k} \rvert}{\lvert X_{h^{\prime}} \cap S_{k}\rvert} \in[0,1];  \quad \mathcal{B}(\mathcal{S}_{x} \mid a) = \min_{k\in [K]}\mathcal{B}\left(S_{k} \mid a\right).
\end{align}
The higher the balance of each cluster, the fairer the clustering result will be. A $(T, K)$-fair clustering is defined as:
\begin{equation}\label{eq:fair_cluster}
\min_{\mathcal{S}_{x}\in \mathbb{S}_{K,x}} F(\mathcal{S}_{x}),  \quad
    s.t. \; \mathcal{B}\left(\mathcal{S}_{x} \mid a\right) \geq T,
\end{equation}
where $T$ controls the degree of fairness for clustering. 
Eq.\eqref{eq:fair_cluster} pursues a partition where each cluster approximately maintains the same ratio over the sensitive attribute as that in the whole dataset~\citep{chierichetti2017fair,kleindessner2019guarantees}.  

\textit{Fair clustering vs. our setting (Def.~\ref{df:our problem}).} Both fair clustering and our problem setting require information about some specific attribute (factor) before conducting clustering. However, fair clustering aims to deliver a clustering structure that meets fairness criteria over a certain sensitive attribute. The clustering performance would degrade when imposing such an extra fairness constraint~\citep{chierichetti2017fair}. In contrast, our target is to improve clustering by eliminating the effect of the confounding factor that distracts the clustering results. Therefore, fair clustering methods (Eq.\eqref{eq:fair_cluster}) cannot be applied to our setting, except a recent deep fair clustering (DFC)~\citep{li2020deep}. DFC was proposed to learn fair representation for clustering and claimed to adopt stronger fairness criteria than the balance criteria (Eq.\eqref{eq:balance}). It introduced an adversarial training paradigm in the context of deep standard clustering to encourage clustering structures to be independent of the sensitive attribute. 
This form of fair clustering objective is the same as ours~(Eq.\eqref{eq:scab}) when the sensitive attribute is designated as the confounding factor. However, the adversarial training increases the difficulty of model training and requires an extra complex constraint to maintain the clustering structure.   

\section{Sanitized clustering against confounding bias}
\label{sect:scab}
This section presents a new framework SCAB to deliver desired clustering structures on complex datasets contaminated by confounding biases. 

\subsection{Deep semantic clustering in the latent space}
We perform clustering in the latent space to capture the semantic structure of complex data.
Consider a general task (e.g., data reconstruction) that involves encoding the data~$x$ into its representation~$z$ via the posterior $q(z\mid x)$. The objective of deep semantic clustering includes the objective $L$ for representation learning and the objective $F$ for clustering on the representations~\citep{xie2016unsupervised,boubekki2021joint}. Namely,
\begin{equation}\label{eq:deep_cluster}
    \min_{q, \; \mathcal{S}_z \in \mathbb{S}_{K, z}} L(q, x)+\eta F(\mathcal{S}_z).
\end{equation}
$\mathcal{S}_z$ denotes a partition in the space where $z$ resides. $\mathbb{S}_{K, z}$ is defined similarly as $\mathbb{S}_{K, x}$ in Eq.\eqref{eq:stand_cluster}. $\eta$ is a trade-off parameter that balances representation learning and clustering.

\yyh{In particular, we choose Variational AutoEncoder (VAE)~\citep{DBLP:journals/corr/KingmaW13} to compute $L(q, x)$, because VAE includes modeling of $q(z\mid x)$, and VAE based clustering can obtain good clustering-favorable representations and is effective for various complex datasets~\citep{jiang2017variational}. }

\subsection{\mbox{Clustering on representations invariant to confounding factor}}
Eq.\eqref{eq:deep_cluster} conducts semantic clustering without considering the existence of the confounding bias. To eliminate the negative impact of the bias on the target clustering structure $\mathcal{S}_z$, we propose deep semantic clustering independent of the confounding factor~$c$. Recalling~Eq.\eqref{eq:scab}, our objective is formulated as:
\begin{align} \label{eq:deep scab}
    \min_{q, \; \mathcal{S}_z \in \mathbb{S}_{K, z}} L(q, x)+\eta F(\mathcal{S}_z), 
 \quad s.t. \; \mathcal{S}_z \perp c.
\end{align}
Since a partition $\mathcal{S}_z $ is defined over the whole dataset while $c$ is collected per sample, directly implementing $\mathcal{S}_z \perp c$ is complex and incurs large computational costs. Instead, we impose an alternative independence constraint between the sample representation $z$ and the confounding factor~$c$, i.e., $z \perp c$, \mbox{both of which are defined at the sample level.}

\begin{proposition}\label{thm:zc}
Let $\mathcal{Z}$ be the representation space, and $Z=\{z_1, z_2, \ldots, z_N\}^T \in \mathcal{Z}$ be the representation set of the dataset $X$. Suppose the clustering algorithm $\mathcal{A}$ takes $Z$ as an input and returns a partition $\mathcal{S}_z$ of $Z$. Namely, $\mathcal{A}: Z \longrightarrow \mathcal{S}_z$. If $z \perp c$, then we naturally have $\mathcal{S}_z \perp c$.
\end{proposition}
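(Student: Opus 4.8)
The plan is to exploit the fact that the partition $\mathcal{S}_z$ is nothing but a deterministic image $\mathcal{A}(Z)$ of the representation set, and to invoke the elementary principle that independence is preserved under measurable maps. Concretely, I would reduce the claim to two links in a chain: first lift the per-sample hypothesis $z \perp c$ to the dataset-level independence $Z \perp C$, where $C=(c_1,\ldots,c_N)$ collects all confounding labels; and then transport that independence through the map $\mathcal{A}$ to obtain $\mathcal{S}_z \perp C$, which is exactly $\mathcal{S}_z \perp c$.

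For the first link I would treat the sample pairs $(z_i, c_i)_{i=1}^N$ as independent draws, consistent with the i.i.d. sampling that underlies the VAE encoding $z_i \sim q(z\mid x_i)$. Under this assumption the hypothesis $z \perp c$ asserts that each pair factorizes, $p(z_i, c_i)=p(z_i)\,p(c_i)$, and independence across samples then factorizes the full joint:
\begin{equation*}
p(z_1,\ldots,z_N,c_1,\ldots,c_N)=\prod_{i=1}^N p(z_i,c_i)=\Big(\prod_{i=1}^N p(z_i)\Big)\Big(\prod_{i=1}^N p(c_i)\Big)=p(Z)\,p(C),
\end{equation*}
which is precisely $Z \perp C$.

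For the second link I would use that $\mathcal{S}_z=\mathcal{A}(Z)$ is a measurable deterministic function of $Z$ alone. The $\sigma$-algebra generated by $\mathcal{A}(Z)$ is contained in the one generated by $Z$, so every event about $\mathcal{S}_z$ is already an event about $Z$; hence $Z \perp C$ forces $\mathcal{A}(Z) \perp C$, i.e.\ $\mathcal{S}_z \perp c$. Equivalently, for bounded measurable $g,h$ one has $\mathbb{E}[g(\mathcal{S}_z)\,h(C)]=\mathbb{E}[g(\mathcal{A}(Z))]\,\mathbb{E}[h(C)]$ by $Z \perp C$, and this factorization over all such $g,h$ is the definition of independence.

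The step I expect to be the real obstacle is the first link, i.e.\ upgrading the marginal, per-sample independence $z \perp c$ to the joint independence $Z \perp C$; this does not hold for free. If the pairs $(z_i,c_i)$ were allowed to be correlated across $i$ (for instance $z_1=c_2$, $z_2=c_1$ with $c_1,c_2$ independent), then each $z_i$ is independent of its own $c_i$ while $Z$ is a deterministic bijection of $C$, so $Z \not\perp C$ and the conclusion would fail. Thus the proof genuinely hinges on making explicit the assumption that the samples are drawn independently; once that is in place, both links are routine, and the functional-independence step is entirely standard.
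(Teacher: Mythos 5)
Your proposal is correct, and it is essentially the argument the paper relies on: the paper offers no explicit proof of this proposition (it simply asserts the conclusion follows ``naturally''), and the intended reasoning is exactly your second link --- $\mathcal{S}_z=\mathcal{A}(Z)$ is a deterministic measurable function of $Z$ alone, and independence is preserved under such maps. Your first link is where you add something the paper leaves implicit: the hypothesis $z\perp c$ is stated per sample, while the conclusion $\mathcal{S}_z\perp c$ is a dataset-level statement, and upgrading the former to $Z\perp C$ genuinely requires independence of the pairs $(z_i,c_i)$ across $i$ (your swap counterexample $z_1=c_2$, $z_2=c_1$ shows the claim can fail otherwise); making that i.i.d.\ assumption explicit is a legitimate strengthening of the paper's presentation rather than a deviation from it.
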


Proposition~\ref{thm:zc} demonstrates clustering over representations~$z$ that is invariant to the confounding factor~$c$ can derive a clustering structure~$\mathcal{S}_z $ that is uninformative of the confounding factor~$c$.
Thus, our objective can be reformulated as:
\begin{align} \label{eq:scab_1}
    \min_{q, \; \mathcal{S}_z \in \mathbb{S}_{K, z}} L(q, x)+\eta F(\mathcal{S}_z), \quad
    s.t. \; z \perp c.
\end{align}
The independence constraint $z \perp c$ is still a strong condition and is difficult to optimize directly. We approximate it by minimizing the mutual information $I(z,c)$~\citep{moyer2018invariant}. Adding the term $I(z,c)$, the objective~Eq.\eqref{eq:scab_1} becomes:
\begin{equation}\label{eq:scab_2}
\begin{aligned}
    \min_{q, \; \mathcal{S}_z \in \mathbb{S}_{K, z}} &L(q, x)+\eta_1 I(z, c) + \eta_2 F(\mathcal{S}_z).
\end{aligned}
\end{equation}
where $\eta_1$ and $\eta_2$ are the hyper-parameters that balance the three losses. In Eq.\eqref{eq:scab_2}, the interested clustering factor, which is embedded in the representation~$z$, and the confounding factor~$c$ can be semantically described in the latent space~\citep{xie2016unsupervised,vincent2010stacked}. Meanwhile, these two factors are disentangled in the latent space. 
By optimizing~Eq.\eqref{eq:scab_2}, we can obtain a semantic clustering structure $\mathcal{S}_z$ \mbox{that is irrelevant to the confounding factor~$c$.}

\begin{figure}[!t]
    \centering
    \includegraphics[width=\linewidth]{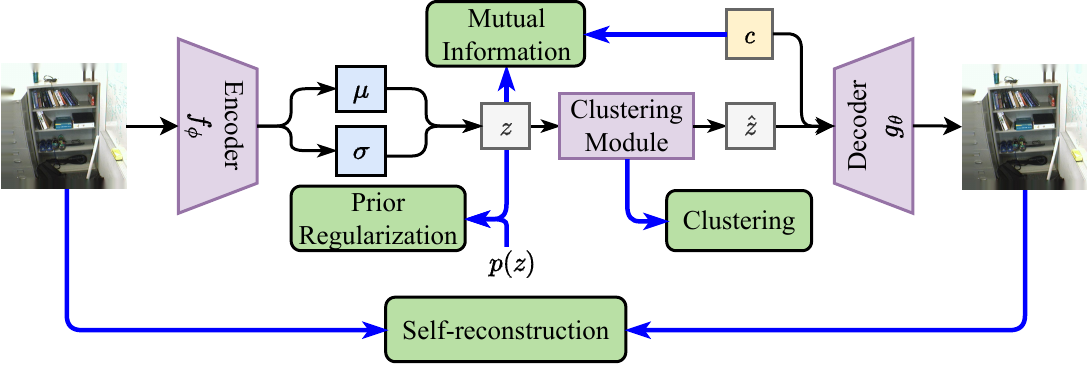}
    \caption{\label{fig:network}The architecture of our Sanitized Clustering against confounding Bias (SCAB).}
\end{figure}
\subsection{The overall clustering framework: SCAB}
To summarize, our framework jointly trains with three modules. First, the VAE structure is adopted as the feature extractor module for learning semantic features. Further, we introduce one disentangling module over the latent space derived by VAE, to disentangle the confounding factor~$c$ and other salient information~$z$ encoded in the data (i.e., $z \perp c$). Last, a clustering module  based on soft $k$-means is incorporated within the VAE structure to perform clustering on the factor of interest (embedded in $z$) only. 

\subsubsection{Variational autoencoder}
Accordingly, we can formulate the statistical (non-linear) dependence between $x$ and $c$ in the latent space, i.e., $p(x, z, c)=p(z, c) p(x \mid z, c)$ where $z$ is the latent variable of $x$.

Similar to VAE~\citep{DBLP:journals/corr/KingmaW13}, the variational lower bound for the expectation of conditional log-likelihood  $\mathbb{E}_{(x, c)}\left[\log p(x \mid c)\right]$ can be deduced as follows:
\begin{align}\label{eq:elbo}
\mathbb{E}_{(x, c)}\left[\log p(x \mid c)\right] &\geq  \mathbb{E}_{(x, c)}\big[\mathbb{E}_{z \sim q(z \mid x)}[\log p(x \mid z, c)] -KL[q(z\mid x) \| p(z)]\big].
\end{align}
The conditional decoder~$p(x \mid z, c)$ takes both $z$ and $c$ as input. We simplify the distribution of $z$ to solely depend on the input~$x$, optimized by the encoder~$q(z \mid x)$. $p(z)$ is the prior distribution that is defined as a Gaussian noise.

We parameterize the approximate posterior $q(z \mid x)$ with an encoder~$f_{\phi}$ that encodes a data sample $x$ to its latent embedding $z$, and parameterize the likelihood $p(x \mid z, c)$ with a conditional decoder $g_{\theta}$ that produces a data sample conditioned both on the latent embedding $z$ and the observed confounding factor $c$. Usually, a particle $z_n$ is sampled from $q(z \mid x)$ for reconstructing $x_n$~\citep{DBLP:journals/corr/KingmaW13}. Then, the loss function (minimization) based on the Monte Carlo estimation of the variational lower bound in~Eq.\eqref{eq:elbo} is defined as:
\begin{align}\label{eq:vae}
\mathcal{L}_{\text{VAE}} & =\sum_{n=1}^{N}\ell_{\text{r}}\left(x_n, g_{\theta}(z_n, c_n)\right) +\sum_{n=1}^{N}K L[q_{\phi}(z \mid x_n) \| p(z)],
\end{align}
where $\ell_{\text{r}}$ denotes the reconstruction loss, which can be instantiated with mean squared loss or cross-entropy loss. $\mathcal{L}_{\text{VAE}}$ is used to calculate the first term $L(q,x)$ in Eq.\eqref{eq:scab_2}.

\subsubsection{Disentanglement by minimizing mutual information}

By minimizing the mutual information between the latent variable~$z$ and the confounding factor~$c$, the bias information is disentangled from other salient information in the latent space. 

\begin{lemma}[MI upper bound~\citep{moyer2018invariant}]
The mutual information between the latent representation~$z$ and the confounding factor~$c$, i.e.,  $I(z, c)$, is subject to an upper bound:
\begin{align}\label{eq:ub_mi}
    I(z, c) \leq -H(x \mid c)-\mathbb{E}_{x, c, z \sim q}[\log p(x \mid z, c)]+\mathbb{E}_{x}[K L[q(z \mid x) \| q(z)]].
\end{align}
\end{lemma}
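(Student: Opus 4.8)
The plan is to derive the bound from a mutual-information chain-rule decomposition, exploiting the structural fact that the encoder $q(z \mid x)$ produces $z$ from $x$ alone, so that $z$ and $c$ are conditionally independent given $x$. First I would expand $I(z;(x,c))$ in two ways via the chain rule:
\begin{equation*}
I(z;(x,c)) = I(z;x) + I(z;c \mid x) = I(z;c) + I(z;x \mid c).
\end{equation*}
Since $z$ is sampled from $q(z \mid x)$ without access to $c$, we have $q(z \mid x,c)=q(z \mid x)$, hence $I(z;c \mid x)=0$. Rearranging yields the exact identity
\begin{equation*}
I(z;c) = I(z;x) - I(z;x \mid c),
\end{equation*}
which reduces the task to expressing $I(z;x)$ exactly and lower-bounding $I(z;x \mid c)$.

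Next I would treat the two terms separately. For $I(z;x)$ I would invoke the standard aggregate-posterior identity $I(z;x)=\mathbb{E}_{x}[KL[q(z \mid x)\|q(z)]]$, where $q(z)=\mathbb{E}_{x}[q(z \mid x)]$ is the marginal (aggregate) posterior; this supplies the last term of the claimed bound verbatim. For the conditional term I would use $I(z;x \mid c)=H(x \mid c)-H(x \mid z,c)$, which isolates the tractable prior-entropy term $H(x \mid c)$ and the intractable $H(x \mid z,c)=-\mathbb{E}_{q}[\log q(x \mid z,c)]$ that must be replaced by a variational surrogate.

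The key inequality is the non-negativity of the KL divergence between the true conditional $q(x \mid z,c)$ and the model decoder $p(x \mid z,c)$, giving $H(x \mid z,c)\le -\mathbb{E}_{x,c,z\sim q}[\log p(x \mid z,c)]$ for any decoder $p$. Substituting this produces
\begin{equation*}
I(z;x \mid c)\ge H(x \mid c)+\mathbb{E}_{x,c,z\sim q}[\log p(x \mid z,c)],
\end{equation*}
and combining with the identity $I(z;c)=I(z;x)-I(z;x \mid c)$ together with the aggregate-posterior expression for $I(z;x)$ yields the stated upper bound after rearrangement.

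The main obstacle, or at least the point requiring the most care, is justifying $I(z;c \mid x)=0$ rigorously: it rests entirely on the modeling choice that the encoder sees only $x$, so that $z \perp c \mid x$. Equally delicate is ensuring every expectation is taken under the correct joint $q(x,c,z)=p(x,c)\,q(z \mid x)$, namely the true data marginal times the encoder, rather than under any generative prior, so that the cross-entropy inequality is applied to the matching distribution. Once these structural facts are pinned down, the remaining steps are routine information-theoretic manipulations.
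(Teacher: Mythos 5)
Your derivation is correct and is essentially the standard argument from Moyer et al.\ (2018), which the paper cites for this lemma without reproducing a proof: the chain-rule identity $I(z;c)=I(z;x)-I(z;x\mid c)$ (using $z\perp c\mid x$ from the encoder structure), the aggregate-posterior identity for $I(z;x)$, and the variational cross-entropy bound on $H(x\mid z,c)$ together give exactly the stated inequality. No gaps; the two points you flag as delicate (the conditional independence and taking expectations under $p(x,c)\,q(z\mid x)$) are indeed the only structural assumptions the bound rests on.
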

As $I(z, c)$ is not directly computable, we use its upper bound (Eq.\eqref{eq:ub_mi}). The constant $H(x \mid c)$ can be ignored. The second term is a reconstruction loss (Eq.\eqref{eq:vae}). The third term on the right of Eq.\eqref{eq:ub_mi} is intractable to compute and is approximated by the pairwise distances as follows~\citep{moyer2018invariant}: 
\begin{align}
\mathbb{E}_{x}[K L[q(z \mid x) \| q(z)]] \approx \sum_{x} \sum_{x^{\prime}} {K L\left[q(z \mid x) \| q(z \mid x')\right]}. \nonumber
\end{align}
The loss function is finally defined as:
\begin{align}\label{eq:mi}
    \mathcal{L}_{\text{MI}} = \sum_{n=1}^{N}\ell_{\text{r}}\left(x_n, g_{\theta}(z_n, c_n)\right) +\sum_{n=1}^{N}\sum_{m=1}^{N}K L[q_{\phi}(z \mid x_n) \| q_{\phi}(z \mid x'_{m})].
\end{align}
The minimization of $I(z, c)$, the second term in Eq.\eqref{eq:scab_2}, is thus replaced by the minimization of its upper bound, i.e.,  $\mathcal{L}_{\text{MI}}$.

\subsubsection{Clustering over the $c$-invariant embedding}
\label{sect:cluster}
Eq.\eqref{eq:mi} helps to filter out the information of the confounding factor~$c$ from the latent code $z$. For the sake of efficiency, we apply $k$-means algorithm to conduct clustering on the $c$-invariant embedding $z$. 
Particularly, the $k$-means clustering loss is defined as:
\begin{equation}\label{eq:cluster}
\mathcal{L}_{\text{cluster}} = \sum_{n=1}^{N}\sum_{k=1}^{K} s_{n k}\left\|z_n-e_{k}\right\|_{2}^{2}.
\end{equation}
$\mathcal{L}_{\text{cluster}}$ is used to compute the third term $F(\mathcal{S}_z)$ in Eq.\eqref{eq:scab_2}. $\mathbf{e}=\{e_1, e_2, \ldots, e_K\}$ are the collection of $K$ centroids. $s_{nk} \in \{0, 1\}$ refers to the group assignment that assigns the latent embedding $z$ to its closest clustering centroid. Namely,
\begin{equation}\label{eq:delta_final}
\lambda_{nk} =\frac{\exp \left(-\tau\left\|z_n-e_{k}\right\|_{2}^{2}\right)}{\sum_{i=1}^{K} \exp \left(-\tau\left\|z_n-e_{i}\right\|_{2}^{2}\right)}, \quad s_{nk} =\left\{\begin{array}{ll}
1 & k=\operatorname{argmax}_{j} \lambda_{nj} \\
0 & \text { otherwise }
\end{array}\right.,
\end{equation}
where $k=1, 2, \ldots, K$. $\tau$ is the temperature and is set to $5$ in the experiment.

Due to the reconstruction loss in VAE (Eq.\eqref{eq:vae}), the latent representations would contain many sample-specific details, which is detrimental to clustering. We follow~\citep{pan2021streamlining} to introduce the following skip-connection formulation to unify the reconstruction goal and the clustering goal. Namely,
\begin{equation}\label{eq:z_hat_final}
    \hat{z}_n=h_{\psi}\left(z_n, \tilde{z}_n\right), \ \textrm{where} \ \tilde{z}_n=\sum_{k=1}^{K} s_{nk} e_{k}.
\end{equation}
\yyh{Note that $\tilde{z}_n$ is one of $K$ clustering centroids as $s_{nk}$ is a one-hot assignment. $h_{\psi}$ constructs a new latent representation~$\hat{z}_n$ that incorporates not only the original $c$-invariant embedding $z_n$ but also its belonging clustering centroid $\tilde{z}_n$ as the input of the decoder. $h_{\psi}$ is implemented as a linear layer.}

\subsubsection{Objective and optimization of SCAB}
Integrating all three modules comes to our new framework Sanitized Clustering Against confounding Bias (SCAB) (Fig.~\ref{fig:network}). Its final objective is formulated as:
\begin{align}\label{eq:obj}
\mathcal{L}(\Theta, \mathbf{e}) =& \mathcal{L}_{\text{VAE}}+\eta_1\mathcal{L}_{\text{MI}}+\eta_2\mathcal{L}_{\text{cluster}},
\end{align}
where $\Theta=\{\theta, \phi, \psi\}$ denote the network parameters and $\mathbf{e}$ represent clustering parameters.  $\eta_1$ and $\eta_2$ are the trade-off parameters. 

\textit{Clustering structure}. After training the model, the  clustering structure $\mathcal{S}_{z}=(S_1, S_2, \ldots, S_K)$ is calculated by: $S_k = \{z_n \mid s_{nk}=1, n=1,2,\ldots,N\}$, where $k=1,2,\ldots,K$ and $s_{nk}$ is defined in Eq.\eqref{eq:delta_final}.

In Eq.\eqref{eq:obj}, two types of parameters, i.e., network parameters $\Theta$, and clustering parameters $\mathbf{e}$, are coupled together, which hinders them from joint optimization. We adopt coordinate descent to alternatively optimize $\Theta$ and $\mathbf{e}$. 

To make our SCAB scalable to large-scale problems, we adopt stochastic gradient updates for all parameters. However, such an update for clustering centroids $\mathbf{e}$ would be unstable because the clustering centroids estimated by different mini-batch data may be of great discrepancy. To overcome this issue, we apply the exponential moving average (EMA) update for the centroids since the EMA update yields good stability~\citep{van2017neural}. Specifically, each centroid $e_k$ is updated online using the assigned neighbor representations in the mini-batches~$\{z_b\}_{b=1}^B$:
\begin{equation}\label{eq:ema}
\mu_k^{(t)}:=\gamma\mu_k^{(t-1)} + (1-\gamma)\sum_{b=1}^{B} s_{bk}^{(t-1)} z_b^{(t-1)},\;   B_k^{(t)}:=\gamma B_k^{(t-1)}+(1-\gamma)\sum_{b=1}^{B}s_{bk}^{(t-1)}, \;  e_k^{(t)} := \frac{\mu_k^{(t)}}{B_k^{(t)}},
\end{equation}
\mbox{where $\gamma \in [0,1]$ is a decay parameter (set to 0.995 by default). $t$ is the iteration index.}

\subsection{Theoretical analysis}
In this section, we theoretically analyze that optimizing network parameters~$\Theta$ of SCAB in Eq.\eqref{eq:obj} is equivalent to (1) maximizing the lower bound of the mutual information between the representation and the interested clustering structure, i.e., $\max_z I(z, s)$, while (2) minimizing the upper bound of the mutual information between the representation and the confounding factor, i.e., $\min_z I(z, c)$.

\begin{theorem}\label{thm:z_e}
Assume a fixed clustering structure, i.e., the clustering centroids $\mathbf{e}=\{e_1, e_2, \ldots, e_K\}$ and the cluster assignments $\{s_{n}\}_{n=1}^{N}$, where $s_n$ is a $K$-dimensional one-hot vector and $s_{nk}$ is defined in Eq.\eqref{eq:delta_final}. The minimization of our clustering object $\mathcal{L}_{\text{cluster}}$ is equivalent to maximizing the lower bound of the mutual information between the representation $z$ and the interested clustering structure, represented by the group assignment $s$, i.e., $I(z, s)$, \revise{given the clustering centroids $\mathbf{e}$}.
\end{theorem}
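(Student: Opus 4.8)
The plan is to connect the $k$-means distortion appearing in $\mathcal{L}_{\text{cluster}}$ to a variational (Barber--Agakov-type) lower bound on $I(z,s)$, using a Gaussian likelihood model for the representation conditioned on its cluster. Throughout, the clustering structure is fixed, so the centroids $\mathbf{e}$ and the one-hot assignments $\{s_n\}_{n=1}^N$ are treated as constants and only the representation $z$ (produced by the encoder) varies. I would choose the decomposition $I(z,s)=H(z)-H(z\mid s)$ rather than $H(s)-H(s\mid z)$, because the generative direction is the one that produces squared distances to centroids and hence matches the form of $\mathcal{L}_{\text{cluster}}$ directly.

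First I would record the standard variational bound. For any conditional density $q(z\mid s)$, the non-negativity of $\mathbb{E}_{p(s)}\!\left[\mathrm{KL}\!\left(p(z\mid s)\,\|\,q(z\mid s)\right)\right]$ gives $-H(z\mid s)=\mathbb{E}_{p(z,s)}[\log p(z\mid s)]\ge \mathbb{E}_{p(z,s)}[\log q(z\mid s)]$, and therefore $I(z,s)\ge H(z)+\mathbb{E}_{p(z,s)}[\log q(z\mid s)]$. Next I would instantiate the variational conditional as an isotropic Gaussian centered at the fixed centroid, $q(z\mid s=k)=\mathcal{N}(z;e_k,\sigma^2 I)$, optionally tying $\sigma^2=1/(2\tau)$ to the temperature in Eq.~\eqref{eq:delta_final}. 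Then $\log q(z\mid s=k)=-\tfrac{1}{2\sigma^2}\|z-e_k\|_2^2+\text{const}$, and evaluating the expectation empirically (Monte-Carlo, as already done for the VAE objective) with the fixed assignments gives $\mathbb{E}_{p(z,s)}[\log q(z\mid s)]=-\tfrac{1}{2\sigma^2 N}\sum_{n,k}s_{nk}\|z_n-e_k\|_2^2+\text{const}=-\tfrac{1}{2\sigma^2 N}\mathcal{L}_{\text{cluster}}+\text{const}$.

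Substituting back yields the lower bound $I(z,s)\ge H(z)-\tfrac{1}{2\sigma^2 N}\mathcal{L}_{\text{cluster}}+\text{const}=:\mathcal{I}_{\mathrm{LB}}$, in which the only term coupling the bound to the encoder is the distortion $\mathcal{L}_{\text{cluster}}$, entering with a negative coefficient. The conclusion is then immediate: decreasing $\mathcal{L}_{\text{cluster}}$ strictly increases $\mathcal{I}_{\mathrm{LB}}$, so minimizing $\mathcal{L}_{\text{cluster}}$ is equivalent to maximizing this lower bound on $I(z,s)$, which is exactly the claimed statement.

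The main obstacle is the marginal entropy term $H(z)$: it is not automatically independent of the encoder, so to obtain a clean monotone relationship I must argue that $H(z)$ is constant for the purpose of this comparison. This is precisely what the qualifier \emph{given the clustering centroids $\mathbf{e}$} is meant to secure, reinforced by the prior-matching term $\mathrm{KL}[q(z\mid x)\,\|\,p(z)]$ in Eq.~\eqref{eq:vae}, which pins the aggregated posterior toward the fixed prior $\mathcal{N}(0,I)$ and thus fixes $H(z)$. Making this step rigorous---rather than simply asserting $H(z)=\text{const}$---is the delicate part of the argument; a secondary point requiring care is the passage from the population expectation $\mathbb{E}_{p(z,s)}[\cdot]$ to the empirical average over the sampled representations $z_n\sim q(z\mid x_n)$, which I would justify by the same Monte-Carlo estimation used for the variational lower bound in Eq.~\eqref{eq:elbo}.
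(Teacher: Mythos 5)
Your route is genuinely different from the paper's: you lower-bound $I(z,s)$ through the generative decomposition $H(z)-H(z\mid s)$ with a Gaussian variational conditional $q(z\mid s=k)=\mathcal{N}(e_k,\sigma^2 I)$, whereas the paper uses the discriminative decomposition $H(s)-H(s\mid z)$ with an auxiliary classifier $q(s\mid z)$ given by the soft $k$-means responsibilities of Eq.~\eqref{eq:delta_final}, and then invokes the small-variance asymptotics $\tau\to\infty$ to reduce $\sum_n\sum_k s_{nk}\log q(s_{nk}=1\mid z_n)$ to $-\mathcal{L}_{\text{cluster}}$. The choice of direction is not cosmetic; it is exactly where your argument breaks.

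In the paper's direction the residual term of the variational bound is $H(s)$, which is an honest constant under the theorem's hypothesis that the assignments $\{s_n\}_{n=1}^N$ are fixed; hence $I(z,s)\ge-\mathcal{L}_{\text{cluster}}+H(s)$ and maximizing the bound over the encoder is literally minimizing $\mathcal{L}_{\text{cluster}}$. In your direction the residual term is $H(z)$, the differential entropy of the aggregate posterior, which does depend on the encoder --- the very object being optimized. Your bound $I(z,s)\ge H(z)-\tfrac{1}{2\sigma^2 N}\mathcal{L}_{\text{cluster}}+\mathrm{const}$ is valid, but maximizing it is not equivalent to minimizing $\mathcal{L}_{\text{cluster}}$, and you cannot simply discard $H(z)$, since differential entropy can be negative and dropping it would destroy the lower bound. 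Your proposed repairs do not close this gap. The qualifier ``given the clustering centroids $\mathbf{e}$'' in the statement refers to the auxiliary distribution in the bound being parameterized by the fixed centroids; it says nothing about freezing $H(z)$. And the term $KL[q(z\mid x)\,\|\,p(z)]$ in Eq.~\eqref{eq:vae} penalizes each per-sample posterior's deviation from the prior but does not pin the marginal $q(z)$ to $p(z)$ (if it did so exactly, $z$ would be independent of $x$ and the representation would be useless), so $H(z)$ genuinely floats during training. To salvage your route you would need an explicit added assumption that the aggregate posterior is held fixed; otherwise you should switch to the paper's discriminative direction, where the leftover term really is constant under the stated hypotheses.
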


\begin{proof}
Based on the definition of mutual information, we have
\begin{equation*}
    I(z,s) = \int p(z, s) \log \frac{p(z, s)}{p(z) p(s)}dz ds =\int p(z, s) \log \frac{p(s \mid z)}{p(s)}dz ds.
\end{equation*}    
Assume $p(x, c, z, s)=p(x,c)p(z \mid x,c)p(s \mid x,c,z)=p(x,c)p(z \mid x,c)p(s \mid x,c)$, where $p(s \mid x,c,z)=p(s \mid x,c)$ follows the conditional independence.
Since $p(s \mid z)=\int p(x, c, s \mid z)dxdc=\int \frac{p(z \mid x, c)p(x, c)}{p(z)}p(s \mid x,c)dxdc$ is intractable, we introduce an auxiliary distribution~$q(s \mid z)$ as an approximation to $p(s \mid z)$~\citep{DBLP:conf/iclr/AlemiFD017}. Because
$\mathrm{KL}[p(s \mid  z)||q(s \mid z)] \geq 0 \Longrightarrow  \int p(s \mid z) \log p(s \mid z) ds \geq \int p(s \mid z) \log q(s \mid z)ds$,
we obtain 
\begin{align}
I(z, s) \geq & \int p(z, s) \log \frac{q(s \mid z)}{p(s)}dz ds =\int p(z, s) \log q(s \mid z) dz ds+H(s) \\
\overset{\circled{1}}{=}  & \int  p(x,c)p(z \mid x,c)p(s \mid x,c) \log q(s \mid z) dx dc dz ds + H(s) \nonumber\\
= & \mathbb{E}_{(x, c) \sim p(x,c)}\mathbb{E}_{z \sim p(z\mid x, c)} \mathbb{E}_{s \sim p(s \mid x,c)}\log q(s \mid z) ds + H(s) = L_{I}(z, s) + H(s).\nonumber
\end{align}
$\circled{1}$ is valid since $p(z, s)=\int p(x,c,z,s)dxdc=\int p(x,c)p(z \mid x,c)p(s \mid x, c)dxdc$. 

The auxiliary distribution~$q(s\mid z)$ can be naturally defined by our $k$-means clustering module (Section~\ref{sect:cluster}). Accordingly, we have 
$q(s_{nk}=1 \mid z_n) = \frac{\exp \left(-\tau\left\|z_n-e_{k}\right\|_{2}^{2}\right)}{\sum_{i=1}^{K} \exp \left(-\tau\left\|z_n-e_{i}\right\|_{2}^{2}\right)}$.
Note that the posterior~$p(z \mid x,c)$ is approximated by the VAE encoder $q(z \mid x)$ constrained with the minimization of $I(z,c)$ and usually one particle $z_n$ is sampled from $q(z|x)$ to reconstruct $x_n$~\citep{DBLP:journals/corr/KingmaW13}. Together with the given cluster assignment $s_n \sim p(s\mid x,c)$, we have
\begin{equation*}
L_{I}(z,s) = \sum_{n=1}^{N}\sum_{k=1}^{K}s_{nk} \log \frac{\exp \left(-\tau\left\|z_n-e_{k}\right\|_{2}^{2}\right)}{\sum_{i=1}^{K} \exp \left(-\tau\left\|z_n-e_{i}\right\|_{2}^{2}\right)} \xLongrightarrow[\tau \longrightarrow +\infty]{\circled{1}}-\sum_{n=1}^{N} \sum_{k=1}^{K} s_{n k}\left\|z_n-e_{k}\right\|_{2}^{2}.
\end{equation*}
$\circled{1}$ is valid because the value of $q(s_{nk}=1 \mid z_n)$ approaches zero for all $k$ except for the one corresponding to the smallest distance~\citep{kulis2012revisiting}. 
Then, we have
\begin{equation}\label{eq:z_s}
    I(z, s) \geq -\mathcal{L}_{\text{cluster}} + H(s).
\end{equation}
$H(s)$ can be ignored since it is a constant. We complete the proof.
\end{proof}

\begin{corollary}\label{corollary:1}
Fixing the centroids $\mathbf{e}$ as well as the cluster assignments $\{s_n\}_{n=1}^{N}$, Eq.\eqref{eq:obj} is subject to the following lower bound:
\begin{equation}\label{eq:obj_mi}
    Eq.\eqref{eq:obj}\geq -\mathbb{E}_{(x, c)}[\log p(x  \mid c)]+\eta_1 I(z, c)-\eta_2 I(z, s).
\end{equation}
Because three terms of Eq.\eqref{eq:obj} are respectively lower bounded according to  Eq.\eqref{eq:elbo} $\&$ Eq.\eqref{eq:vae},  Eq.\eqref{eq:ub_mi} $\&$ Eq.\eqref{eq:mi}, and Eq.\eqref{eq:z_s}. 
\end{corollary}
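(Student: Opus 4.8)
The plan is to prove the bound term by term. Since the SCAB objective in Eq.\eqref{eq:obj} is the weighted sum $\mathcal{L}_{\text{VAE}}+\eta_1\mathcal{L}_{\text{MI}}+\eta_2\mathcal{L}_{\text{cluster}}$, and a sum of lower bounds is a lower bound of the sum, it suffices to lower-bound each of the three summands separately and add the resulting inequalities, using that $\eta_1,\eta_2\ge 0$ so that scaling preserves their direction. Each summand pairs with exactly one earlier result: $\mathcal{L}_{\text{VAE}}$ with the ELBO of Eq.\eqref{eq:elbo} (instantiated by Eq.\eqref{eq:vae}), $\mathcal{L}_{\text{MI}}$ with the mutual-information upper bound of Eq.\eqref{eq:ub_mi} (instantiated by Eq.\eqref{eq:mi}), and $\mathcal{L}_{\text{cluster}}$ with the clustering bound $I(z,s)\ge -\mathcal{L}_{\text{cluster}}+H(s)$ established in Theorem~\ref{thm:z_e}, Eq.\eqref{eq:z_s}.

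Next I would carry out the three rearrangements, watching the inequality directions. First, because $\mathcal{L}_{\text{VAE}}$ is (the Monte Carlo estimate of) the negation of the variational lower bound in Eq.\eqref{eq:elbo}, that inequality reads $-\mathcal{L}_{\text{VAE}}\le \mathbb{E}_{(x,c)}[\log p(x\mid c)]$, i.e. $\mathcal{L}_{\text{VAE}}\ge -\mathbb{E}_{(x,c)}[\log p(x\mid c)]$. Second, identifying the reconstruction and pairwise-KL terms of $\mathcal{L}_{\text{MI}}$ in Eq.\eqref{eq:mi} with the right-hand side of Eq.\eqref{eq:ub_mi} gives $I(z,c)\le -H(x\mid c)+\mathcal{L}_{\text{MI}}$; discarding the constant $H(x\mid c)$ yields $\mathcal{L}_{\text{MI}}\ge I(z,c)$, hence $\eta_1\mathcal{L}_{\text{MI}}\ge \eta_1 I(z,c)$. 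Third — the only step where the sign flips — Eq.\eqref{eq:z_s} rearranges to $\mathcal{L}_{\text{cluster}}\ge -I(z,s)+H(s)$, so that $\eta_2\mathcal{L}_{\text{cluster}}\ge -\eta_2 I(z,s)+\eta_2 H(s)$.

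Summing the three lower bounds reproduces exactly the claimed inequality Eq.\eqref{eq:obj_mi}. The arithmetic of adding inequalities is routine; the delicate part, and the main obstacle, is the handling of the additive constants and of the approximations inherited from the earlier results. Concretely, the identifications of $\mathcal{L}_{\text{VAE}}$ and $\mathcal{L}_{\text{MI}}$ with their bounds hold only through a Monte Carlo estimate and through the pairwise-distance approximation of $\mathbb{E}_x[KL[q(z\mid x)\|q(z)]]$, and Eq.\eqref{eq:z_s} itself is obtained in the $\tau\to\infty$ limit; so the honest statement is that Eq.\eqref{eq:obj_mi} holds up to these previously adopted approximations. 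I would also check that dropping $H(x\mid c)$ and $H(s)$ is admissible in the intended direction: $H(s)\ge 0$ since $s$ is a discrete one-hot assignment, so discarding $\eta_2 H(s)$ only weakens the lower bound, while $H(x\mid c)$ is treated as an ignorable constant exactly as in the discussion following Eq.\eqref{eq:ub_mi}. Collecting the three contributions then completes the argument.
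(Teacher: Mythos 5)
Your proposal is correct and follows essentially the same route as the paper, which justifies the corollary by exactly this term-by-term pairing of $\mathcal{L}_{\text{VAE}}$, $\mathcal{L}_{\text{MI}}$, and $\mathcal{L}_{\text{cluster}}$ with Eq.\eqref{eq:elbo}, Eq.\eqref{eq:ub_mi}, and Eq.\eqref{eq:z_s} and then summing. Your extra care about the sign of the clustering term, the discarded constants $H(x\mid c)$ and $H(s)$, and the Monte Carlo and pairwise-KL approximations is a faithful elaboration of what the paper leaves implicit rather than a different argument.
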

From Corollary~\ref{corollary:1}, we conclude that the optimization for $\Theta$ given $\mathbf{e}$ is to learn a clustering-favorable representation, which is invariant to the confounding factor~$c$.

\revise{
\begin{remark}[Continuous/Incomplete confounding factor]
(1) Our method and theoretical analysis are applicable to the continuous confounding factors as well, as they do not specify the exact form of the confounding factor. We will conduct experiments to demonstrate the efficacy of our SCAB on the continuous confounding factor in Section~\ref{sect:exp}.
(2) For the known confounding factor without ready-to-use annotations, we additionally collect a small amount of supervision for it to avoid too much manual cost. Then, we can solve the problem in a semi-supervised manner, which will be explored in Section~\ref{sect:semi}. 
\end{remark}
}

\section{Experiments}
\label{sect:exp}

\textit{Dataset.} \revise{We conduct experiments on six image datasets (\textit{UCI-Face, Rotated Fashion, MNIST-USPS, Office-31, CIFAR10-C, Rotated Fashion-Con}) and one signal-vector dataset (\textit{HAR}) containing confounding factors that would bias the clustering results (see Table~\ref{tb:dataset})}. In particular, \textit{Rotated Fashion} is constructed by introducing the rotation factor into the Fashion-MNIST dataset. Specifically, we pick up images from cloth categories, i.e., ``T-shirt/top'', ``Trouser'', ``Pullover'', ``Dress'', ``Coat'' and ``Shirt'', for simplicity. We first randomly sample $1,000$ images from each of the six classes (zero degree). Then, each image is augmented with four views of $72$, $144$, $216$, and $288$ degrees, respectively. For \textit{Office-31}, we select samples from Amazon and Webcam as training data following~\citet{li2020deep}. \revise{\textit{Rotated Fashion-Con} is contructed similarly, but the rotation angle is set to a continuous range of 0 to 60 degrees.} For \textit{CIFAR10-C}, we consider one in each main category of corruptions, namely, frost, Gaussian blur, impulse noise, and elastic transform for simplicity.
\begin{table}[!t]
\centering
\caption{\label{tb:dataset} Statistics of datasets. $K$ denotes the number of clusters. $G$ denotes the number of categories or \revise{range of the values.}}
\renewcommand{\arraystretch}{1}
\setlength{\tabcolsep}{1.1mm}{
\begin{tabular}{c|c|c|c|c}
\bottomrule[1.3pt]
Dataset               & \#sample & \#dim                 & $K$ & confounding factor ($G$)    \\ \hline
\textit{UCI-Face}~\citep{DBLP:journals/sigkdd/BayKPS00}              & 1,872    & $32\times30$          & 4         & identity (20)       \\
\textit{Rotated Fashion}~\citep{xiao2017fashion}       & 30,000   & $28\times28$          & 5         & cloth category (6)  \\
\textit{MNIST-USPS}~\citep{726791,hull1994database}            & 67,291   & $32\times32$          & 10        & source of digit (2)     \\
\textit{Office-31}~\citep{saenko2010adapting}             & 3,612    & $224\times224\times3$ & 31        & domain source (2)   \\
\textit{CIFAR10-C}~\citep{DBLP:conf/iclr/HendrycksD19}             & 40,000   & $32\times32\times3$   & 10        & corruption type (4) \\
\textit{HAR}~\citep{anguita2013public}              & 10,299    & $561$          & 6         & subject (30)     \\
\textit{Rotated Fashion-Con}~\citep{xiao2017fashion}       & 30,000   & $28\times28$         & 6         & rotation angle (0-60) \\
\bottomrule[1.3pt]
\end{tabular}}
\end{table}

\textit{Implementations.} We employ the AE architecture described in~\citet{xie2016unsupervised} for all datasets. The encoder is a fully connected multi-layer perceptron (MLP) with dimensions $D\text{-}500\text{-}500\text{-}2000\text{-}d$. $D$ is the dimension of input. $d$ is the dimension of centroids, which is set to 10 for all datasets. All layers use ReLU activation except the last. The decoder is mirrored of the encoder. \yyh{Compared with those AE-based clustering methods~\citep{xie2016unsupervised,guo2017improved}, our SCAB introduces only one extra linear layer for Eq.\eqref{eq:z_hat_final}, which bring negligible network parameter overhead.}
\yyh{We apply SCAB to raw data for \textit{UCI-Face}, \textit{Rotated Fashion}, \textit{MNIST-USPS}, \revise{\textit{HAR} and \textit{Rotated Fashion-Con}} considering their simplicity. Inspired by the recent state-of-art (SOTA) clustering methods~\citep{tsai2021mice,niu2022spice}, which rely on structured representations to achieve superior performance on complex datasets, we apply SCAB to the extracted features for \textit{Office-31} and \textit{CIFAR10-C} considering their complexity. We use ImageNet-pretrained ResNet50 to extract features for Office-31 following the SOTA clustering method on Office-31~\citep{li2020deep}. We use MoCo~\citep{he2020momentum} to extract features for \textit{CIFAR10-C} following the SOTA clustering method on \textit{CIFAR10-C}~\citep{niu2022spice}.} Note that these feature extractors do not utilize any supervision regarding the datasets. 
We adopt the Adam optimizer. The default learning rate, training epoch, and batch size are $5\text{e}\text{-}4$, $1,000$, and $256$, respectively. 

\textit{Baselines.} The method that removes the confounding factor in the raw space via linear projection, i.e., RUV~\citep{jacob2016correcting} (Eq.\eqref{eq:ruv}, Eq.\eqref{eq:lp}), is included as our first baseline. Further, we extend RUV to eliminate the confounding factor in the latent space. In Particular, we first train AE to obtain the latent representations for \textit{UCI-Face}, \textit{Rotated Fashion}, \textit{MNIST-USPS} and \revise{\textit{HAR}}. We use the extracted features described above as the representations for \textit{Office-31} and \textit{CIFAR10-C}. Then, we apply RUV to remove the bias information from the representations. We name these two baselines as $\text{RUV}_{x}$ and $\text{RUV}_{z}$, respectively. \yyh{We also consider Iterative Spectral Method (ISM)~\citep{wu2019solving}) and Deep Fair Clustering (DFC)~\citep{li2020deep} as our baselines since these two methods can be deemed as the same objective as ours (Eq.\eqref{eq:scab}). We do not compare with other fair clustering methods since they have different goals from our setting (see Section~\ref{sect:ac&fc}). \textit{For a fair comparison, we take raw images of \textit{UCI-Face}, \textit{Rotated Fashion} and \textit{MNIST-USPS} and extracted features of \textit{Office-31} and \textit{CIFAR10-C} as input for all the baselines except for $\text{RUV}_{x}$, which takes raw data as input.}} \revise{ISM, DFC and RUV are designed for the discrete confounding factor and cannot be applied to the continuous one, so they are not run on \textit{Rotated Fashion-Con}.}

\textit{Metrics.} We evaluate different clustering methods with three widely-used clustering metrics,  i.e., accuracy (ACC), normalized mutual information (NMI) and \revise{Adjusted Rand Index (ARI)}. For these metrics, values range between 0 and 1, and a higher value indicates better performance.

\begin{table}[!t]
\centering
\caption{\label{tb:all} SCAB compared with baselines \mbox{that can remove the confounding} factor w.r.t. ACC ($\uparrow$), NMI ($\uparrow$) and \revise{ARI ($\uparrow$)}. The best results are highlighted in bold. The second-best results are underlined. 
}
\begin{tabular}{c|c|c|c|cc|c}
\bottomrule[1.3pt]
Dataset & Metric & ISM & DFC & $\text{RUV}_{x}$ & $\text{RUV}_{z}$ & SCAB \\\hline
\multirow{3}{*}{\textit{UCI-Faces}}       
& ACC & 0.763 & 0.394 & 0.380 & 0.539 & \textbf{0.824} \\
& NMI & 0.454 & 0.087 & 0.163 & 0.322 & \textbf{0.570} \\
& ARI & 0.482 & 0.054 & 0.042 & 0.198 & \textbf{0.554} \\ \hline
\multirow{3}{*}{\textit{Rotated Fashion}} 
& ACC & N.A. & 0.539 & 0.579 & \textbf{0.993} & \underline{0.985} \\
& NMI & N.A. & 0.351 & 0.516 & \textbf{0.969} & \underline{0.940} \\
& ARI & N.A. & 0.248 & 0.318 & \textbf{0.982} & \underline{0.961} \\\hline
\multirow{3}{*}{\textit{MNIST-USPS}}      
& ACC & N.A. & \underline{0.825} & 0.457 & 0.785  & \textbf{0.919} \\
& NMI & N.A. & \underline{0.789} & 0.379 & 0.756  & \textbf{0.837} \\ 
& ARI & N.A. & - & 0.236 & 0.690 & \textbf{0.831} \\\hline
\multirow{3}{*}{\textit{Office-31}}       
& ACC & 0.659 & \underline{0.692} & 0.186 & 0.673  & \textbf{0.724} \\
& NMI & 0.671 & \underline{0.718} & 0.232 & 0.714  & \textbf{0.728} \\ 
& ARI & 0.495 & - & 0.065 & 0.548 & \textbf{0.565} \\\hline
\multirow{3}{*}{\textit{CIFAR10-C}}       
& ACC & N.A. & 0.283 & 0.208 & \underline{0.357}  & \textbf{0.458} \\
& NMI & N.A. & 0.186 & 0.085 & \textbf{0.317} & \underline{0.311}  \\
& ARI & N.A. & 0.105 & 0.040 & \underline{0.087} & \textbf{0.274} \\ \hline
\revise{\multirow{3}{*}{\textit{HAR}}}       
& ACC & 0.556 & 0.722 & \underline{0.732} & 0.715  & \textbf{0.823} \\
& NMI & 0.477 & 0.632 & 0.689 & \underline{0.791} & \textbf{0.830}  \\
& ARI & 0.368 & 0.546 & 0.598 & \underline{0.671} & \textbf{0.754} \\
\bottomrule[1.3pt]
\end{tabular}
\end{table}

\subsection{Performance comparison}
\label{sect:efficacy}
Quantitative results of our SCAB and various baselines that can remove the confounding factor are summarized in Table~\ref{tb:all}. It shows that: 
(1) \textbf{SCAB obtains superior results on all datasets.}  This is because it adopts an effective non-linear dependence measure and a joint training paradigm, which can learn clustering-favorable representations invariant to the confounding factor. 
\revise{(2) SCAB can be applied for removing the continuous confounding factor (see \textit{Rotated Fashion-Con} in Table~\ref{tb:simple}) while existing baselines cannot.}
(3) \textbf{Latent space is better than raw space. Non-linear correlation is better than linear correlation.} $\text{RUV}_{z}$ achieves better performance than $\text{RUV}_{x}$, which shows that removing the confounding factor in the latent space is more effective than that in the raw space. $\text{RUV}_{z}$ obtains worse results than our SCAB on four datasets since $\text{RUV}_{z}$ simply adopts linear projection and heavily relies on the extracted representations beforehand, which cannot deal with these complex datasets where the desired clustering factor and the confounding factor are coupled non-trivially in the latent space.
(4) \textbf{DFC originally designed for two categories degenerates on the dataset with more categories} (i.e., \textit{UCI-Faces}, \textit{Rotated Fashion}, and \textit{CIFAR10-C}). On one hand, more categories may increase the difficulty of adversarial training, making it unable to effectively remove the confounding factor. On the other hand, the constraint requires training a DEC~\citep{xie2016unsupervised} for each category of data. For example, it needs to train a DEC on around 93 images for \textit{UCI-Face}, which would suffer from insufficient training samples. 
(5) \textbf{ISM cannot be executed on large-scale datasets}, i.e, \textit{Rotated Fashion}, \textit{MNIST-USPS} and \textit{CIFAR10-C}. ISM requires a memory complexity of $\mathcal{O}(n^2)$ and needs to store a data matrix with a size larger than $10k \times 10k$ for these datasets, which is beyond our computing capacity. 

\subsection{Efficacy of removing the confounding factor for clustering}

\begin{table}[!t]
\centering
\caption{\label{tb:simple} \revise{SCAB compared with standard clustering w.r.t. ACC ($\uparrow$), NMI ($\uparrow$) and ARI ($\uparrow$) on four simple image datasets and one signal-vector dataset.}}
\begin{tabular}{c|c|cc|c}
\bottomrule[1.3pt]
Dataset & Metric & $k$-means & IDEC & SCAB \\ \hline
\multirow{3}{*}{\textit{UCI-Faces}}       
& ACC  & 0.266 & 0.356  & \textbf{0.824} \\
& NMI  & 0.002 & 0.069  & \textbf{0.570} \\
& ARI  & -0.001 & 0.058  & \textbf{0.554} \\\hline
\multirow{3}{*}{\textit{Rotated Fashion}} 
& ACC & 0.487 & 0.602 & \textbf{0.985} \\
& NMI & 0.414 & 0.611 & \textbf{0.940} \\
& ARI  & 0.260 & 0.465  & \textbf{0.961} \\\hline
\multirow{3}{*}{\textit{MNIST-USPS}}      
& ACC & 0.506 & 0.789 & \textbf{0.919} \\
& NMI & 0.447 & 0.766 & \textbf{0.837} \\
& ARI & 0.333 & 0.689  & \textbf{0.831} \\ \hline
\revise{\multirow{3}{*}{\textit{HAR}}}
& ACC & 0.600 & 0.680 & \textbf{0.823} \\
& NMI & 0.589 & 0.733 & \textbf{0.830} \\
& ARI & 0.461 & 0.632 & \textbf{0.754} \\ \hline
\revise{\multirow{3}{*}{\textit{Rotated Fashion-Con}}}     
& ACC & 0.369 & 0.387 & \textbf{0.576} \\
& NMI & 0.228 & 0.287 & \textbf{0.399} \\
& ARI  & 0.139 & 0.191  & \textbf{0.329} \\
\bottomrule[1.3pt]
\end{tabular}
\end{table}

To demonstrate the gain of clustering that takes into account the removal of the confounding factor, we include the comparison with standard clustering methods -- $k$-means~\citep{bishop2006pattern}, IDEC~\citep{guo2017improved}\footnote{IDEC is a representative AE-based clustering method.}, PICA~\citep{DBLP:conf/cvpr/HuangGZ20} and SPICE~\citep{niu2022spice}\footnote{PICA and SPICE are recently proposed self-supervised clustering methods. SPICE is the SOTA method.} in Table~\ref{tb:simple} and Table~\ref{tb:complex}. We apply PICA and SPICE only on \textit{Office-31} and \textit{CIFAR10-C} considering that they were proposed for complex image datasets. For a fair comparison, we take raw images of \textit{UCI-Face}, \textit{Rotated Fashion}, \textit{MNIST-USPS}, \revise{\textit{HAR} and \textit{Rotated Fashion-Con}} and extracted features of \textit{Office-31} and \textit{CIFAR10-C} as input for the methods except for PICA. PICA takes raw images of all datasets as input since it needs to conduct image augmentations for partition confidence maximization~\citep{DBLP:conf/cvpr/HuangGZ20}.

\paragraph{Improved by removing the confounding factor}
Table~\ref{tb:simple} and Table~\ref{tb:complex} show that: compared with standard clustering methods, our SCAB achieves superior performance on all datasets. It verifies the claim that our SCAB which explicitly removes the influence of the confounding factor performs better than the standard clustering methods. Note that PICA obtains poor results since it conducts clustering on raw features ($k$-means on MoCo extracted feature achieves better results than PICA on raw features also reported in~\citet{tsai2021mice}). And SPICE performs worse than IDEC because it applies a discriminative model for clustering, which is more vulnerable to the confounding factor than IDEC which is AE-based clustering.

\begin{figure}[!tb]
    \centering
    \begin{minipage}[t]{.136\linewidth}
    \includegraphics[width=\linewidth]{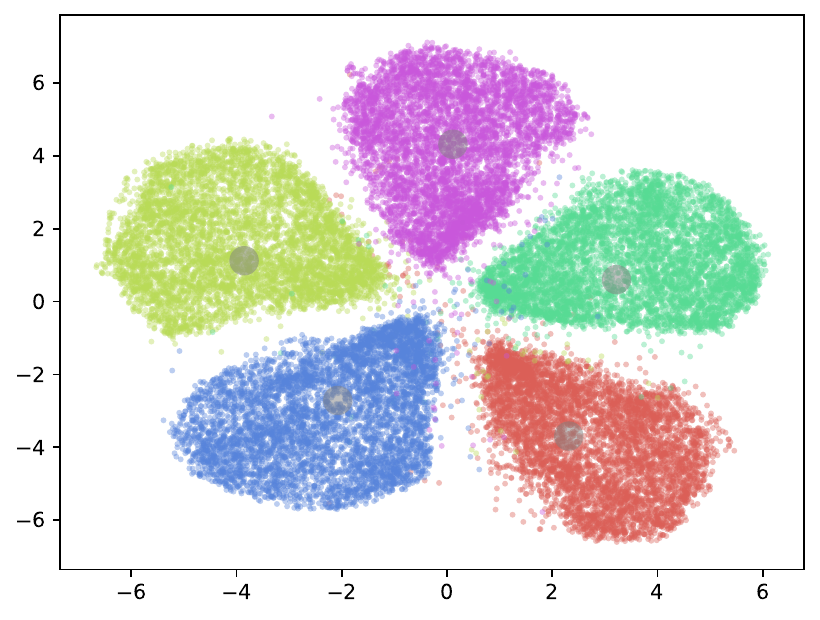}
  \end{minipage}
  \begin{minipage}[t]{.136\linewidth}
    \includegraphics[width=\linewidth]{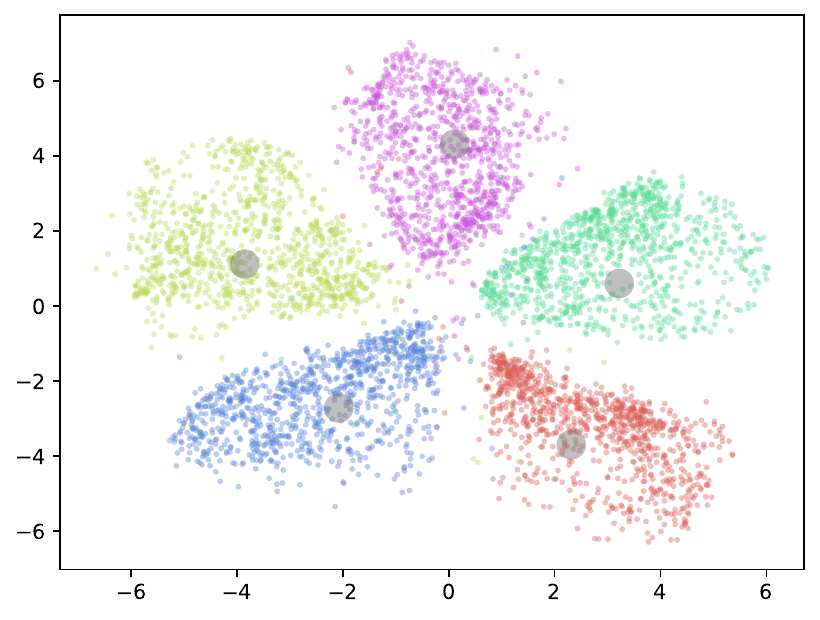}
  \end{minipage}
  \begin{minipage}[t]{.136\linewidth}
    \includegraphics[width=\linewidth]{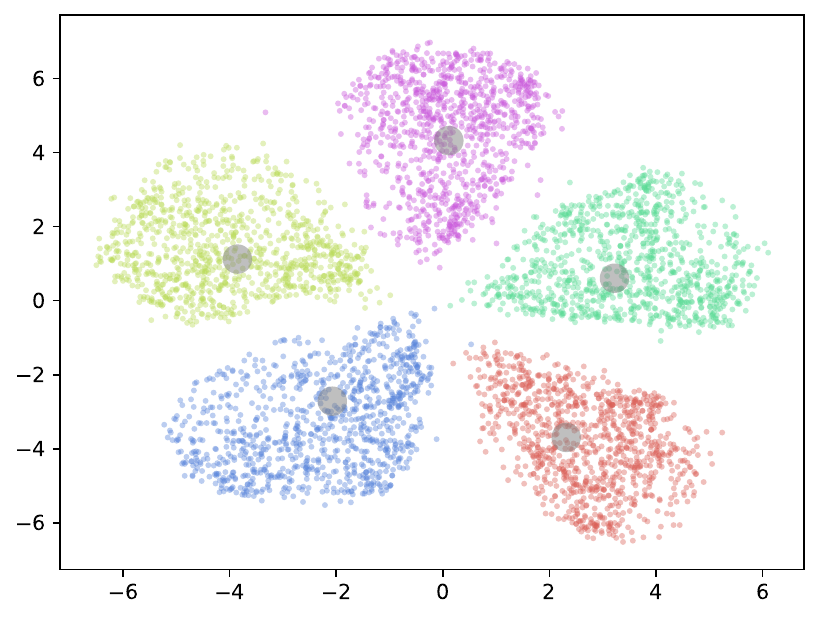}
  \end{minipage}
  \begin{minipage}[t]{.136\linewidth}
  \centering
    \includegraphics[width=\linewidth]{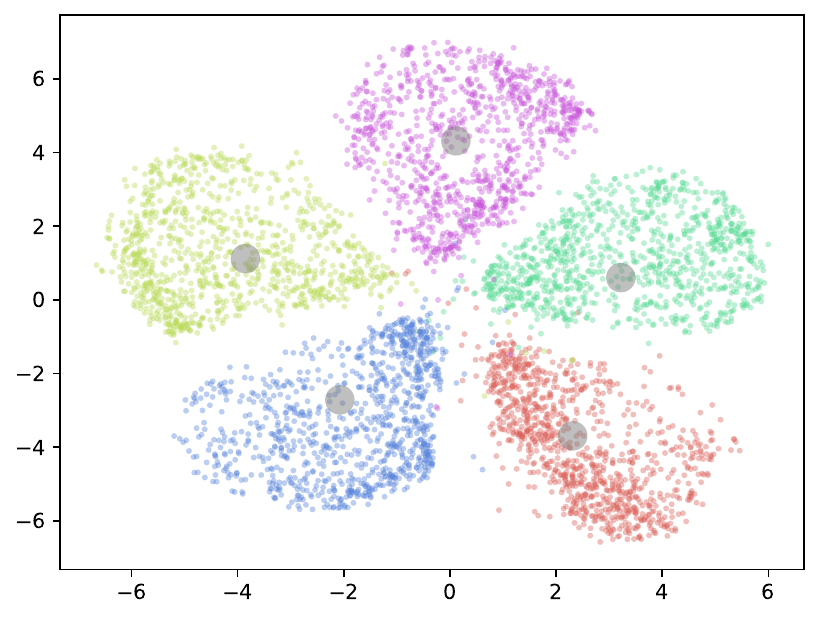}
  \end{minipage}
  \begin{minipage}[t]{.136\linewidth}
    \includegraphics[width=\linewidth]{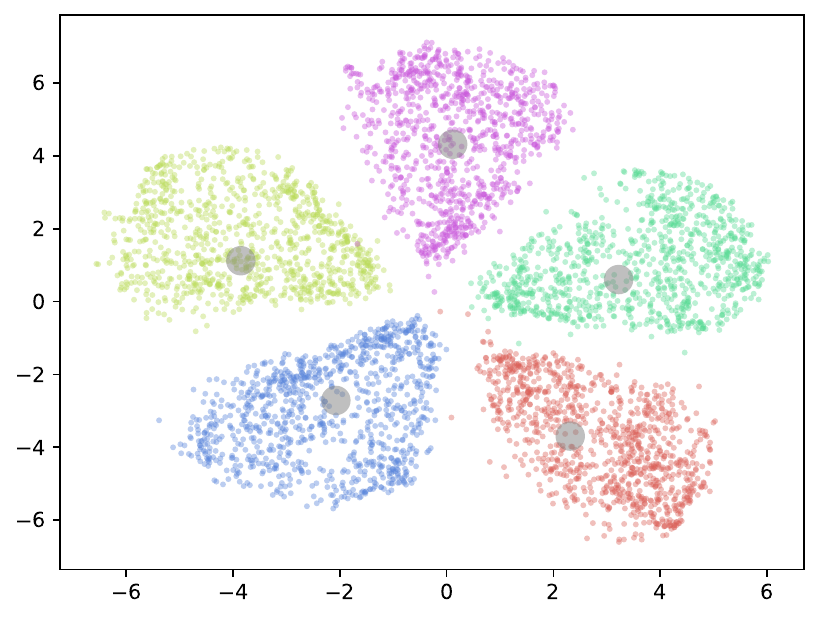}
  \end{minipage}
  \begin{minipage}[t]{.136\linewidth}
    \includegraphics[width=\linewidth]{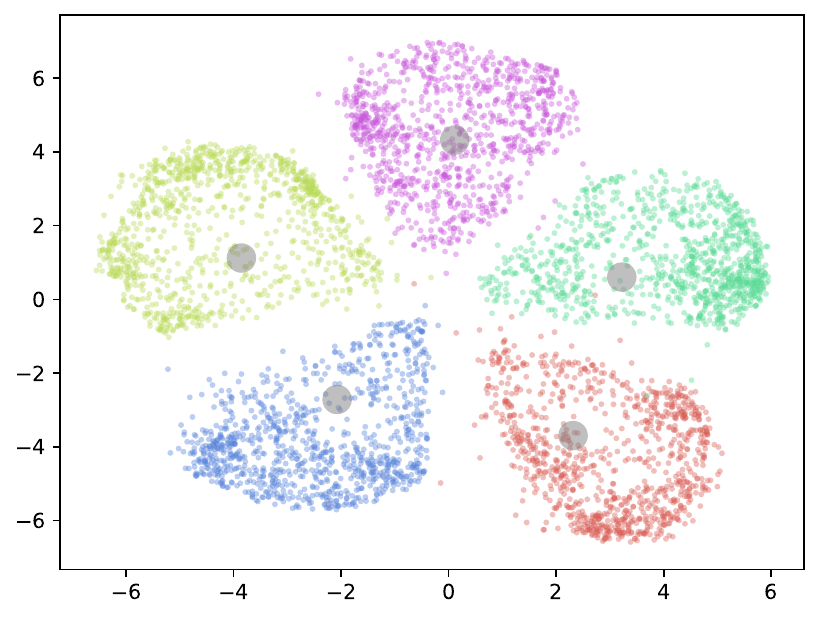}
  \end{minipage}
  \begin{minipage}[t]{.136\linewidth}
    \includegraphics[width=\linewidth]{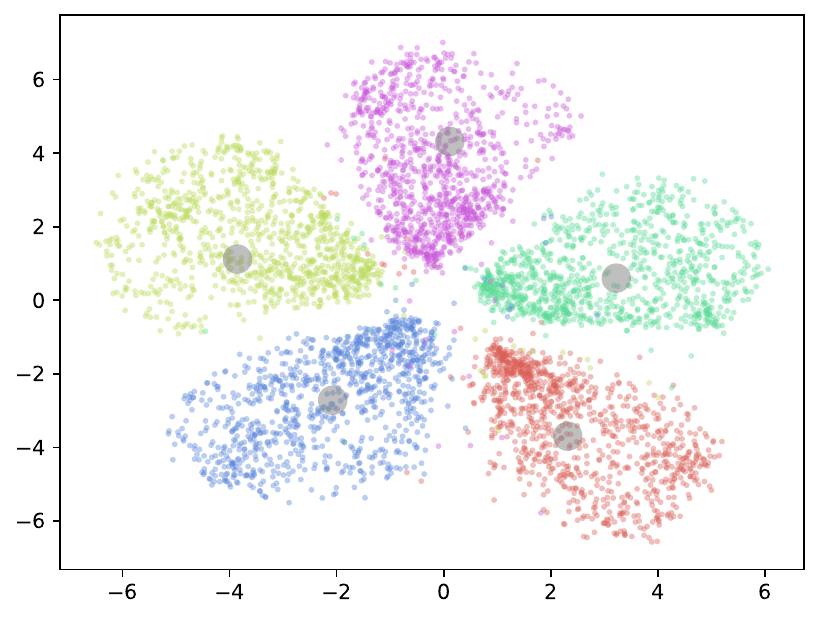}
  \end{minipage}
      \begin{minipage}[t]{.136\linewidth}
  \centering
    \includegraphics[width=\linewidth]{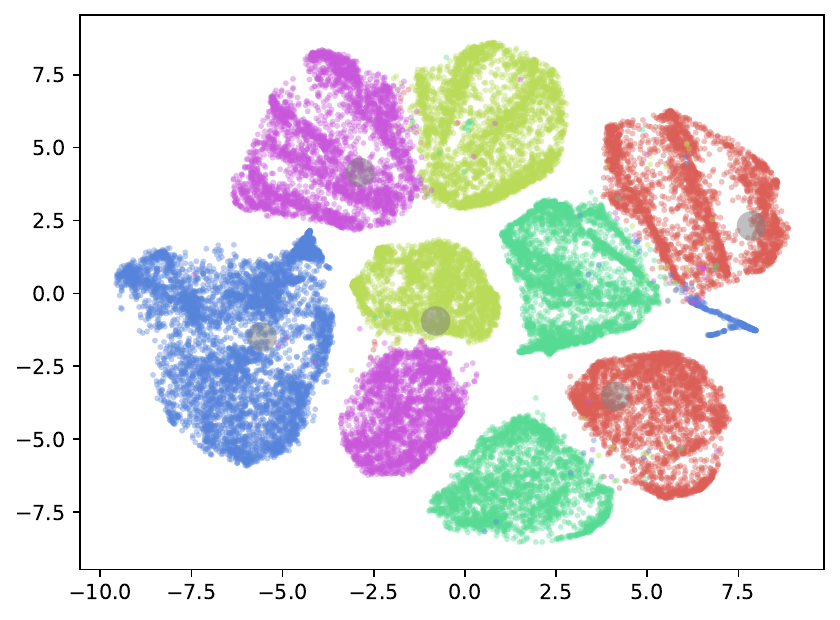}
    \centerline{All data\label{fig:fmnist_tsne_all_ae_kmeans}}
  \end{minipage}
  \begin{minipage}[t]{.136\linewidth}
  \centering
    \includegraphics[width=\linewidth]{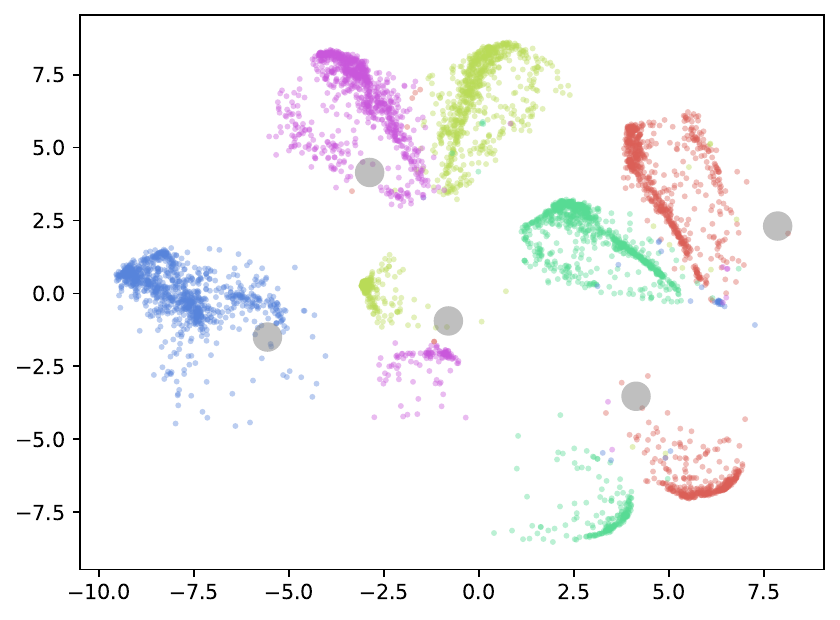}
    \centerline{T-shirt/top\label{fig:fmnist_tsne_c_ae_kmeans}}
  \end{minipage}
  \begin{minipage}[t]{.136\linewidth}
  \centering
    \includegraphics[width=\linewidth]{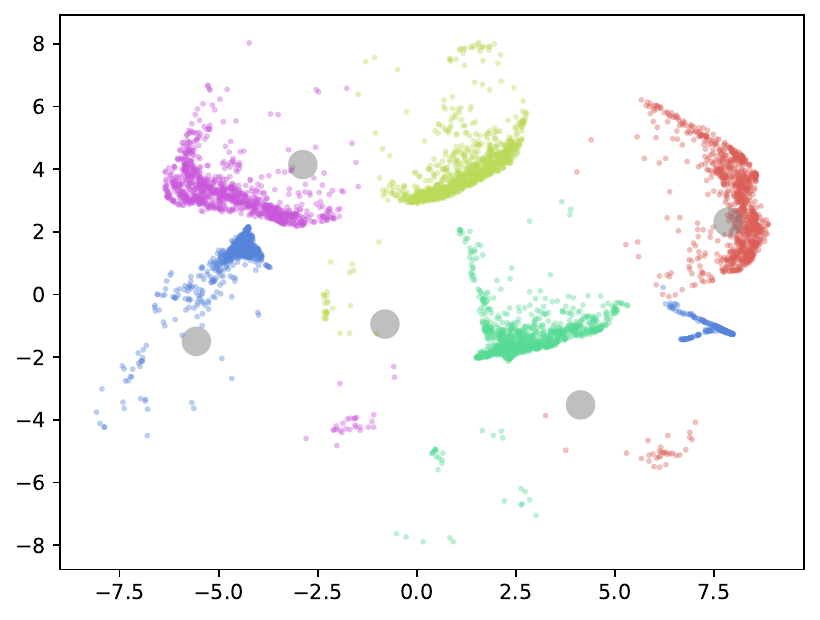}
    \centerline{Trouser}
  \end{minipage}
  \begin{minipage}[t]{.136\linewidth}
  \centering
    \includegraphics[width=\linewidth]{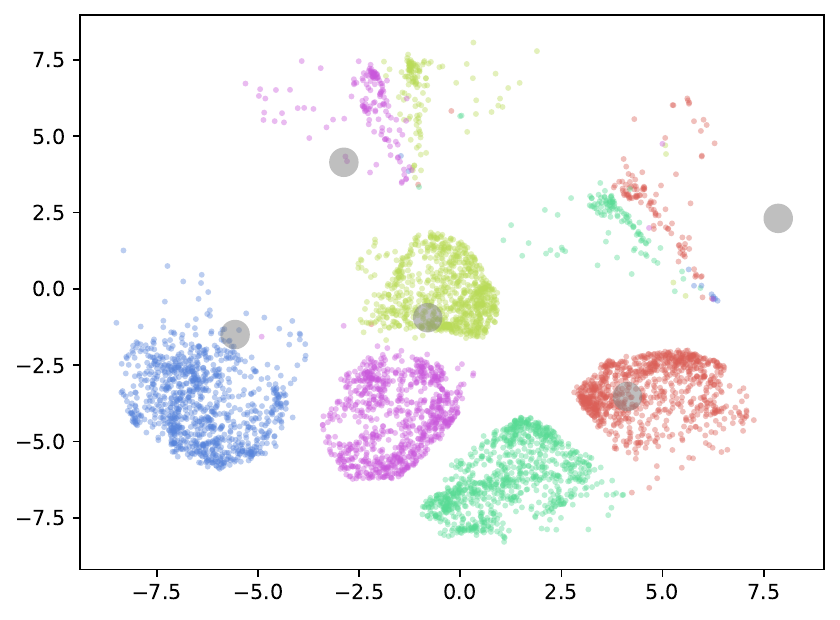}
    \centerline{Pullover}
  \end{minipage}
  \begin{minipage}[t]{.136\linewidth}
  \centering
    \includegraphics[width=\linewidth]{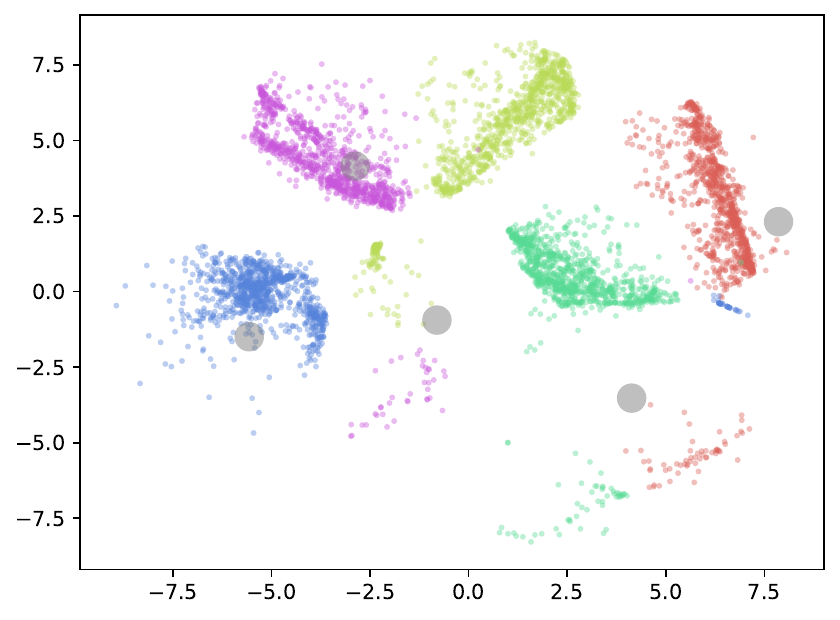}
    \centerline{Dress}
  \end{minipage}
  \begin{minipage}[t]{.136\linewidth}
  \centering
    \includegraphics[width=\linewidth]{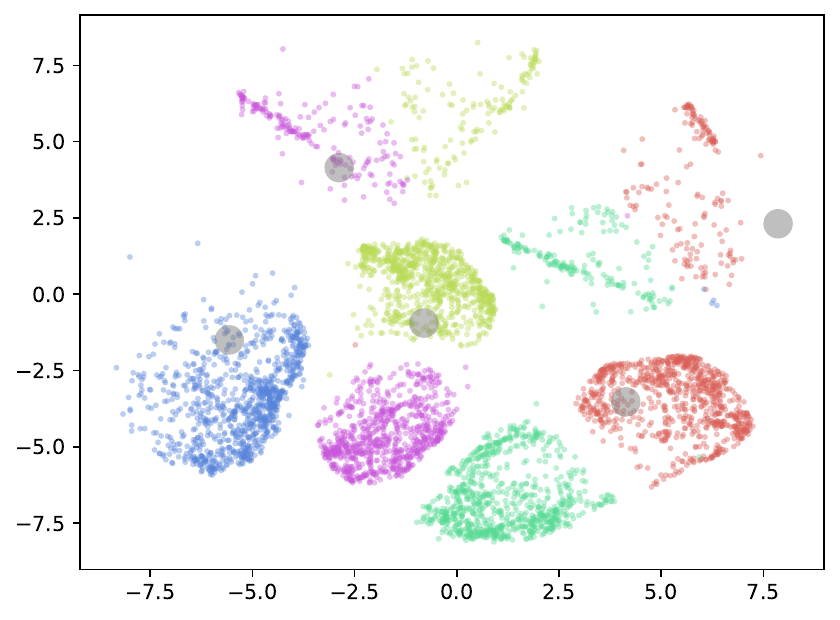}
    \centerline{Coat}
  \end{minipage}
  \begin{minipage}[t]{.136\linewidth}
  \centering
    \includegraphics[width=\linewidth]{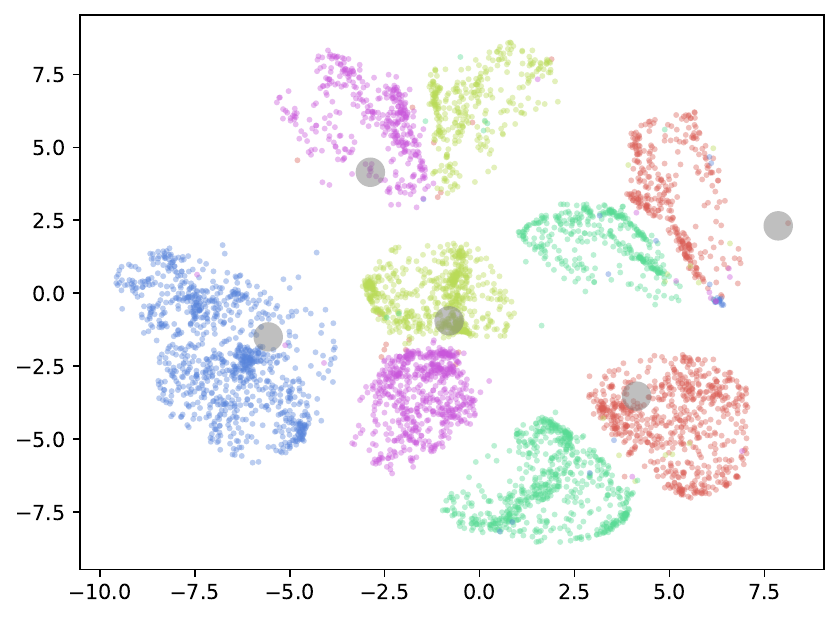}
    \centerline{Shirt\label{fig:fmnist_tsne_h_idec}}
  \end{minipage}
    \caption{t-SNE on latent representations and clustering centroids from SCAB (1st raw) and IDEC (2nd raw) on \textit{Rotated Fashion}, respectively. The big grey dots are the centroids. The small dots are the representations, of which the colors denote the ground truth category labels. \label{fig:fmnist}} 
\end{figure}

\begin{table}[!tb]
\centering
\caption{\label{tb:complex} SCAB compared with standard clustering w.r.t. ACC ($\uparrow$), NMI ($\uparrow$) and \revise{ARI ($\uparrow$)} on two complex image datasets.} 
\begin{tabular}{c|c|cccc|c}
\bottomrule[1.3pt]
Dataset & Metric & $k$-means & IDEC & PICA & SPICE & SCAB \\\hline
\multirow{3}{*}{\textit{Office-31}}       
& ACC & 0.648 & 0.634 & 0.440 & 0.231 & \textbf{0.724} \\
& NMI & 0.689 & 0.690 & 0.536 & 0.341 & \textbf{0.728} \\
& ARI & 0.506 & 0.500 & 0.305 & 0.117 & \textbf{0.565} \\\hline
\multirow{3}{*}{\textit{CIFAR10-C}}       
& ACC & 0.247 & 0.420 & 0.220 & 0.313 & \textbf{0.458} \\
& NMI & 0.225 & \textbf{0.380} & 0.178 & 0.294 & {0.311}  \\
& ARI & 0.074 & 0.257  & 0.082 & 0.149 & \textbf{0.274} \\
\bottomrule[1.3pt]
\end{tabular} 
\end{table}

\begin{figure}[!ht]
  \begin{minipage}[t]{.47\linewidth}
  \centering
    \includegraphics[width=\linewidth]{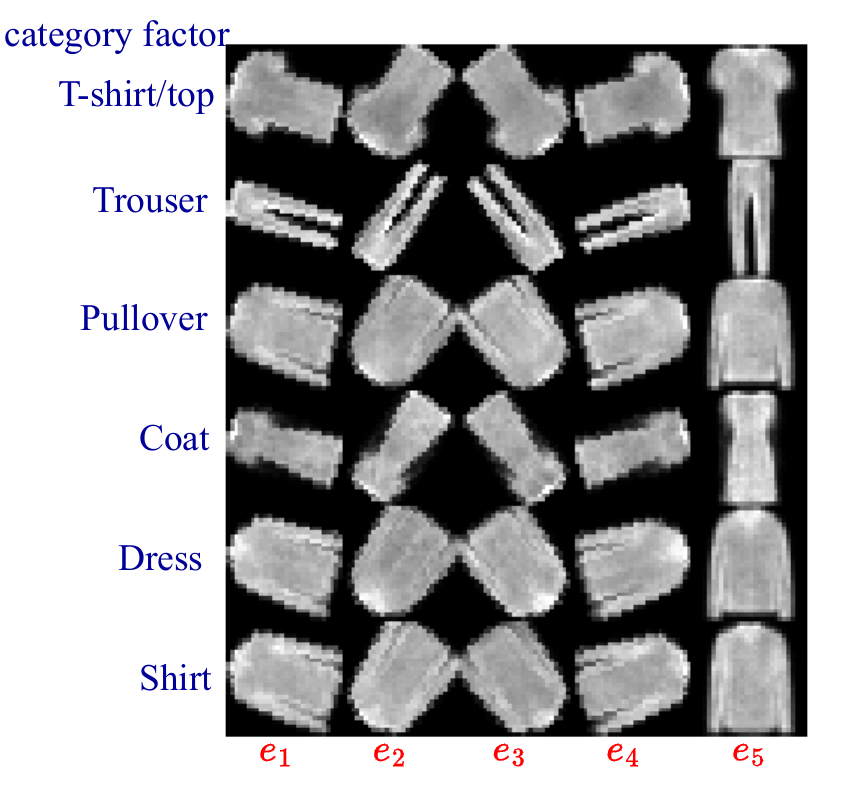}\vskip-0.05in
    \centerline{(a) \textit{Rotated Fashion} ($28\times 28$)}
  \end{minipage}
    \begin{minipage}[t]{.515\linewidth}
  \centering
    \includegraphics[width=\linewidth]{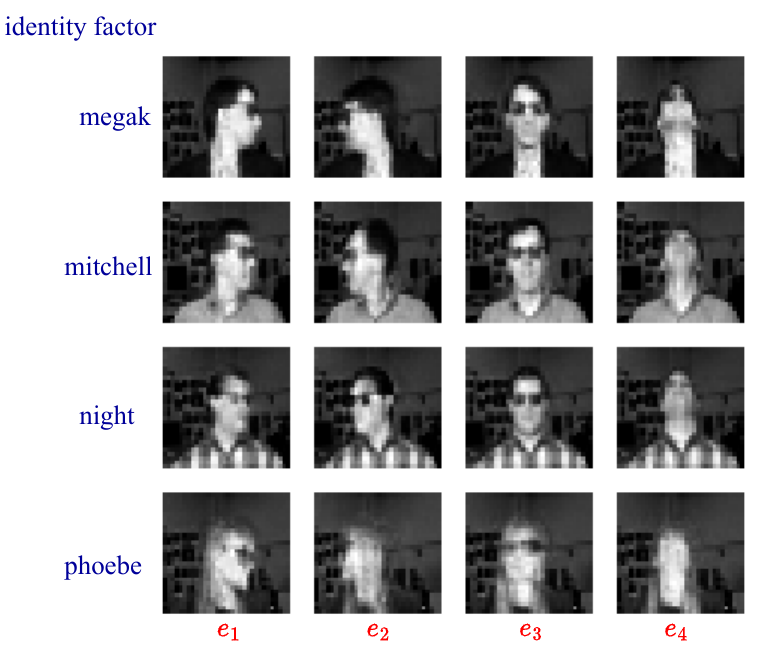}\vskip-0.05in
    \centerline{(b) \textit{UCI-Face} ($32\times 30$)}
  \end{minipage}
    \begin{minipage}[t]{1\linewidth}
     \centering
    \includegraphics[width=\linewidth]{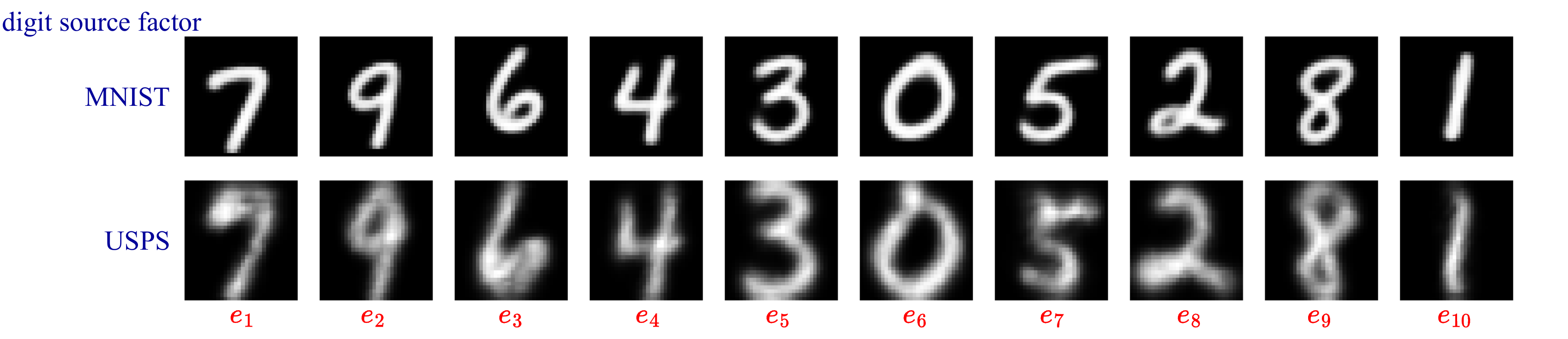}\vskip-0.05in
    \centerline{(c) \textit{MNIST-USPS} ($32\times 32$)}
  \end{minipage}
    \caption{\label{fig:cen_scab} Centroids' reconstruction of SCAB on \textit{Rotated Fashion}, \textit{UCI-Face} and \textit{MNIST-USPS}, respectively. Each column is conditioned on the same clustering centroid. Each row is conditioned on different labels of the cloth category factor, the identity factor, and the digit source factor, respectively.}
\end{figure}

\paragraph{Invariant representations} To further illustrate the effectiveness of removing the confounding factor, we visualize the latent representations and the clustering centroids for our SCAB and IDEC (i.e., standard clustering that ignores the confounding factor) on \textit{Rotated Fashion}, respectively. From the t-SNE visualization of our SCAB (the first row of Fig.~\ref{fig:fmnist}), we can see that: (1) the clusters are well separated and the centroids are located at the center of each cluster. (2) These categories’ representations are not only well aligned with each other, but also the whole data’s representations. This demonstrates that our SCAB's latent representations are invariant to the confounding factor, i.e., the cloth category label. (3) Each centroid represents one of the five rotation angles in the dataset. In addition, the reconstruction of the centroids is exactly the Fashion-MNIST objects, which demonstrates our SCAB captures semantic clustering structures. 

The t-SNE visualization of IDEC (the second row of Fig.~\ref{fig:fmnist}) shows that: (1) IDEC obtains an inferior clustering structure due to the negative impact of the confounding factor. Specifically, the cloth category introduces variances into the data, making the derived structure away from the desired one w.r.t. the rotation factor. (2) These categories' representations are neither aligned with each other nor with the representation of the entire data. It demonstrates that IDEC's latent representations are corrupted by the confounding factor, i.e., variances of cloth category.

\paragraph{Disentangled centroid reconstruction} 

We can reconstruct the centroids conditioned on the confounding factor for SCAB.
Fig.~\ref{fig:cen_scab} shows that (1) the latent embedding~$z$ and the confounding factor~$c$ are well disentangled. In particular, the information of the confounding factor is well captured by~$c$. (2) The centroids can capture clear structures, i.e., rotation angles for \textit{Rotated Fashion}, the pose angle for \textit{UCI-Face}, and the digit type for \textit{MNIST-USPS}, respectively. On \textit{Office-31} and \textit{CIFAR10-C}, we do not reconstruct the centroids on these datasets as the extracted features are used as model input.
\begin{wrapfigure}{r}{0.45\textwidth} \vskip-0.25in
  \centering
  \includegraphics[width=\linewidth]{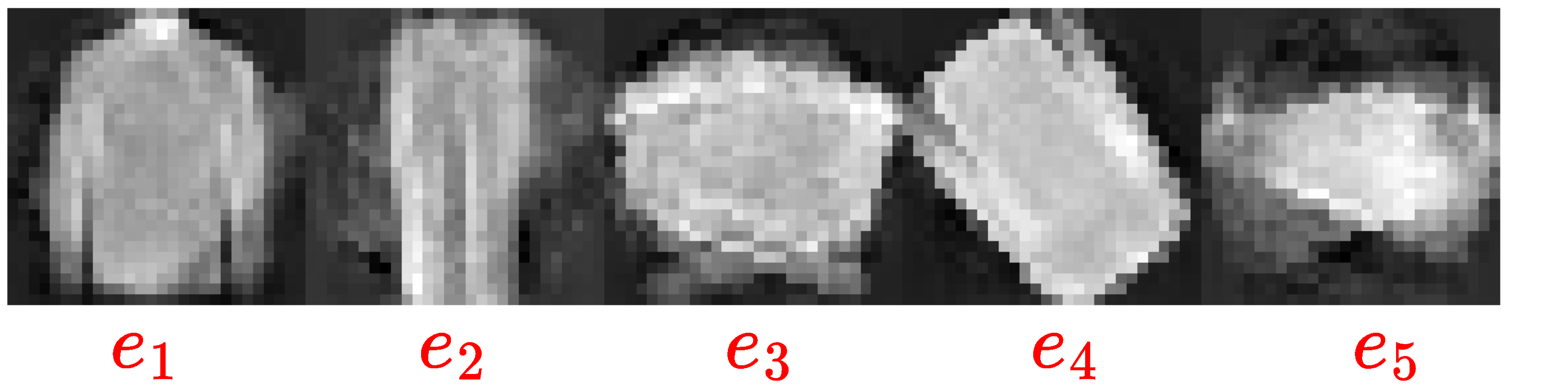} \vskip-0.05in
   \caption{\label{fig:cen_idec} Centroids' reconstruction of IDEC on \textit{Rotated Fashion}.} \vskip-0.2in
\end{wrapfigure}

Fig.~\ref{fig:cen_idec} shows that (1) IDEC does not have the ability to disentangle the confounding factor~$c$ from the latent space. (2) Its centroids do not capture all rotation angles in the dataset as they are distracted by the cloth categories. For example, $e_1$ and $e_2$ represent the shirt and the trouser with the same angle, respectively. 

\subsection{Ablation study}
We study the effectiveness of each module by excluding it from our SCAB (Fig.~\ref{fig:network}). 
\begin{table}[!htb]
\centering
\caption{\label{tb:ablation} Ablation study of SCAB on \textit{Rotated Fashion}. ``Clu'' means the clustering module. ``Dis'' means the disentanglement module.} 
\begin{tabular}{c|c|c|c|c}
\toprule[1.3pt]
Metric  & w/o Clu & w/o Dis  & w/o Clu \& Dis & SCAB \\ \hline
ACC  & 0.513 & 0.857 & 0.487 & \textbf{0.985}    \\ 
NMI  & 0.376  & 0.803 & 0.414 & \textbf{0.940}    \\
ARI  & 0.277  & 0.757 & 0.260 & \textbf{0.961}    \\
\bottomrule[1.3pt]
\end{tabular} 
\end{table}

Table~\ref{tb:ablation} shows that: (1) our SCAB gets the best results, which justifies the necessity of each module. (2) Without the disentanglement module to remove the confounding factor via mutual information, the clustering performance drops significantly since the confounding factor would distract desired the clustering results. (3) A poor clustering structure is obtained without the clustering module because it fails to derive clustering-friendly representations.
(4) The clustering performance is worse when excluding both the clustering module and the disentanglement module.

\subsection{Extension to the incomplete confounding factor}\label{sect:semi}

We explore the performance of SCAB given different amounts of labeled data w.r.t. the confounding factor on \textit{Rotated Fashion}. Applying SCAB to this semi-supervised setting, we first train a classifier on the labeled data and use it to predict labels for the remaining unlabeled data. Then SCAB is naturally applied to these fully-labeled data. Particularly, we employ a convolutional neural network classifier for the classification. IDEC is adopted as the baseline without removing the confounding factor following the same setting as SCAB.

We plot the test accuracy of the classifier (calculated on the remaining unlabeled data) and the clustering performance (ACC and NMI) of SCAB in Fig.~\ref{fig:semi} with the percentage of labeled data from 0.1\% to 100\%. 
It shows that 
(1) compared to IDEC which ignores the confounding factor, our SCAB can improve the clustering performance even with a very small amount of labeled data. 
\begin{wrapfigure}{r}{0.55\textwidth} \vskip-0.2in
  \centering
  \includegraphics[width=1\linewidth]{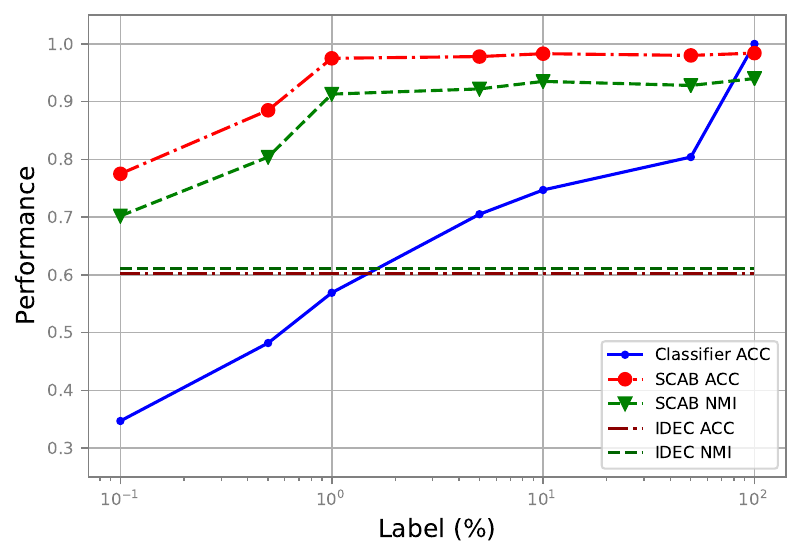} \vskip-0.05in
    \caption{\label{fig:semi}Clustering performance of SCAB given partial labels w.r.t. the confounding factor on \textit{Rotated Fashion}. ``Classifier ACC'' is the test accuracy of the classifier. $x$~axis is the ratio of labeled data. } \vskip-0.1in
\end{wrapfigure}
(2) When there are less than 0.5\% labeled data, the test accuracy of the classifier is low, smaller than 0.5. Accordingly, the results of SCAB are relatively not so good since there are more than 50\% samples assigned with wrong labels. 
(3) When the labeled data is larger than 1\%, there are more than 50\% samples assigned with true labels. 
Though the percentage of label noise is still very high, SCAB can perform well since the correct labels dominate and the structured representations can be robust to label noise. In conclusion, our SCAB can work well even given a small amount of labeled data regarding the confounding factor.

\section{Conclusion}
\label{sect:con}
We have introduced a general framework SCAB for a new stream of clustering that aims to deliver clustering results invariant to the pre-designated confounding factor. 
SCAB is the first deep clustering framework that can eliminate the confounding factor in the semantic latent space of complex data via a non-linear dependence measure with theoretical guarantees.
We have demonstrated the efficacy of SCAB on various datasets using label indicators of the confounding factor.
In the future, we can extend our SCAB to more types of data, e.g., text/ time series data. In addition, while this study focuses on sanitized clustering given the known confounding factor with (partially) labeled supervision, it is interesting to explore clustering with unindicated confounding factors. \revise{Last, theoretical analysis on the confounding factor that is not fully observed is also a potential direction.}


\bibliography{sn-bibliography}

\end{document}